\newtheorem{theorem}{Theorem}
\newtheorem{definition}{Definition}
\newtheorem{remark}{Remark}
\newtheorem{example}{Example}
\title{\LARGE \bf
	Multi-Agent Combinatorial Path Finding \\with Heterogeneous Task Duration
}
\author{Yuanhang Zhang, Xuemian Wu, Hesheng Wang, Zhongqiang Ren*
\thanks{All authors are with Shanghai Jiao Tong University, China. (email: yuanhang0610@gmail.com; zhongqiang.ren@sjtu.edu.cn).} 
}
\newcommand{\blue}{\color{blue}}
\newcommand\abbrCBSS{CBSS-TPG\xspace}
\newcommand\abbrCBXS{CBSS-D\xspace}
\newcommand\abbrMAPF{MAPF\xspace}
\newcommand\abbrMCPF{MCPF\xspace}
\newcommand\abbranony{anonymous\xspace}
\newcommand\abbrMATSPF{MCPF-D\xspace}
\newcommand\abbrTPGD{TPG-D\xspace}
\begin{document}


	\maketitle



	\begin{abstract}
Multi-Agent Combinatorial Path Finding (MCPF) seeks collision-free paths for multiple agents from their initial locations to destinations, visiting a set of intermediate target locations in the middle of the paths, while minimizing the sum of arrival times.
While a few approaches have been developed to handle MCPF, most of them simply direct the agent to visit the targets without considering the task duration, i.e., the amount of time needed for an agent to execute the task (such as picking an item) at a target location.
MCPF is NP-hard to solve to optimality, and the inclusion of task duration further complicates the problem.
This paper investigates heterogeneous task duration, where the duration can be different with respect to both the agents and targets.
We develop two methods, where the first method post-processes the paths planned by any MCPF planner to include the task duration and has no solution optimality guarantee; and the second method considers task duration during planning and is able to ensure solution optimality.
The numerical and simulation results show that our methods can handle up to 20 agents and 50 targets in the presence of task duration, and can execute the paths subject to robot motion disturbance.


	\end{abstract}
	

	
	
	\graphicspath{{./figures/}}
	
	\section{Introduction}\label{matspf:sec:intro}

Multi-Agent Path Finding (\abbrMAPF) seeks a set of collision-free paths for multiple agents from their start to goal locations while minimizing the total arrival times.
This paper considers a generalized problem of \abbrMAPF, called Multi-Agent Combinatorial Path Finding (\abbrMCPF), which further requires the agents to visits a set of intermediate target locations before reaching their goals.
\abbrMAPF and \abbrMCPF arises in applications such as manufacturing and logistics.
Consider a fleet of mobile robots in a factory that are tasked to unload finished parts from different machines.
The robots share a cluttered workspace and need to find collision-free paths to visit all the machines as the intermediate targets before going to their destinations to store the parts.
\abbrMCPF naturally arises in such settings to optimize the manufacturing plan.
\abbrMCPF is challenging due to both the collision avoidance between agents as in \abbrMAPF, and target sequencing, i.e., solving Traveling Salesman Problems (TSPs) to find the allocation and visiting orders of targets for all agents.
Both the TSP and the \abbrMAPF are NP-hard to solve to optimality~\cite{yu2013structure_nphard}, and so is \abbrMCPF.

\begin{figure}[tb]
    \centering
    \subfloat
    {
        \begin{minipage}[b]{1\linewidth}
            \centering
            \includegraphics[scale=0.27]{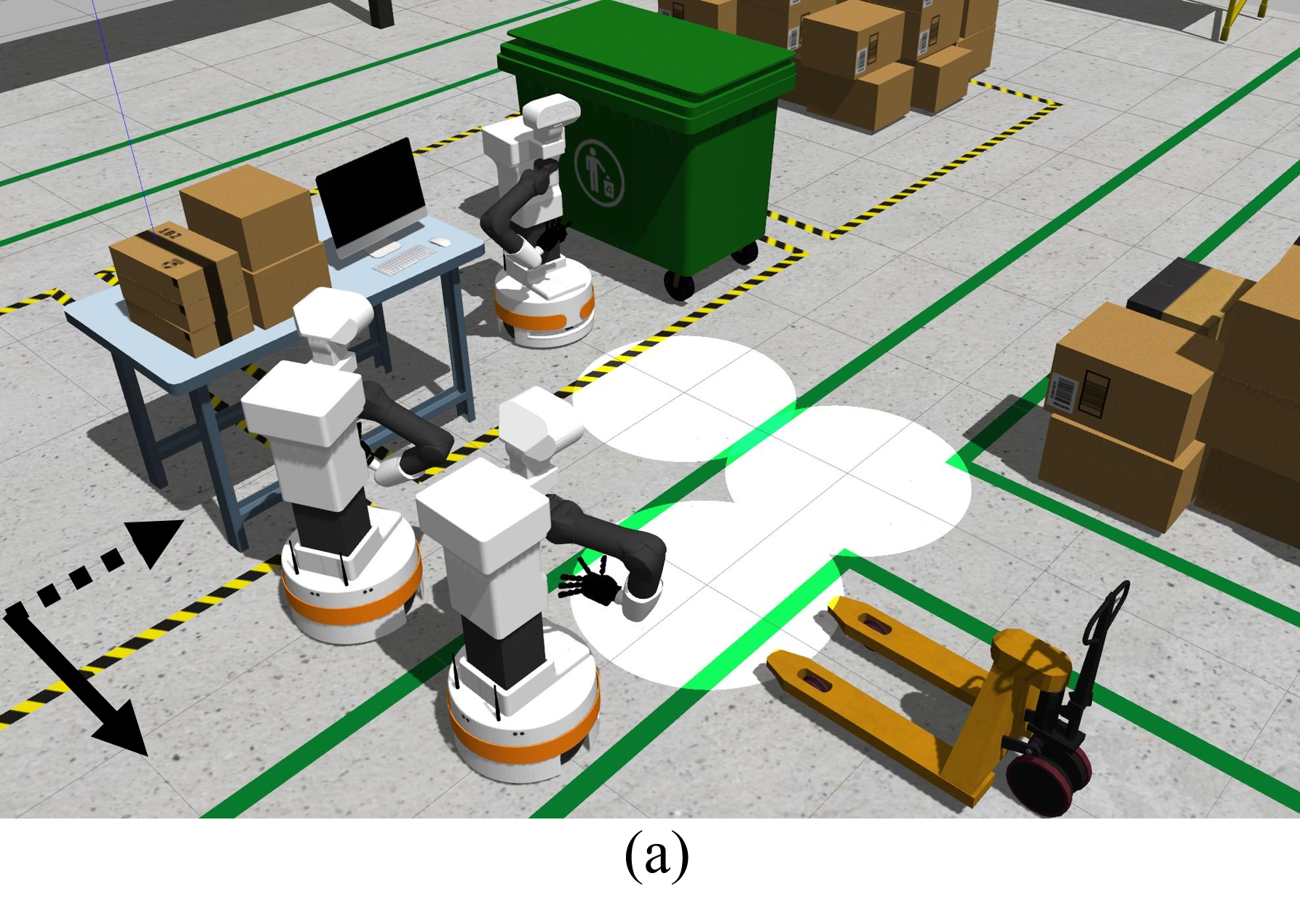}
        \end{minipage}
    }
    \hfill
    \vspace{-0.6cm}
    \subfloat
    {
        \begin{minipage}[b]{1\linewidth}
            \centering
            \includegraphics[scale=0.325]{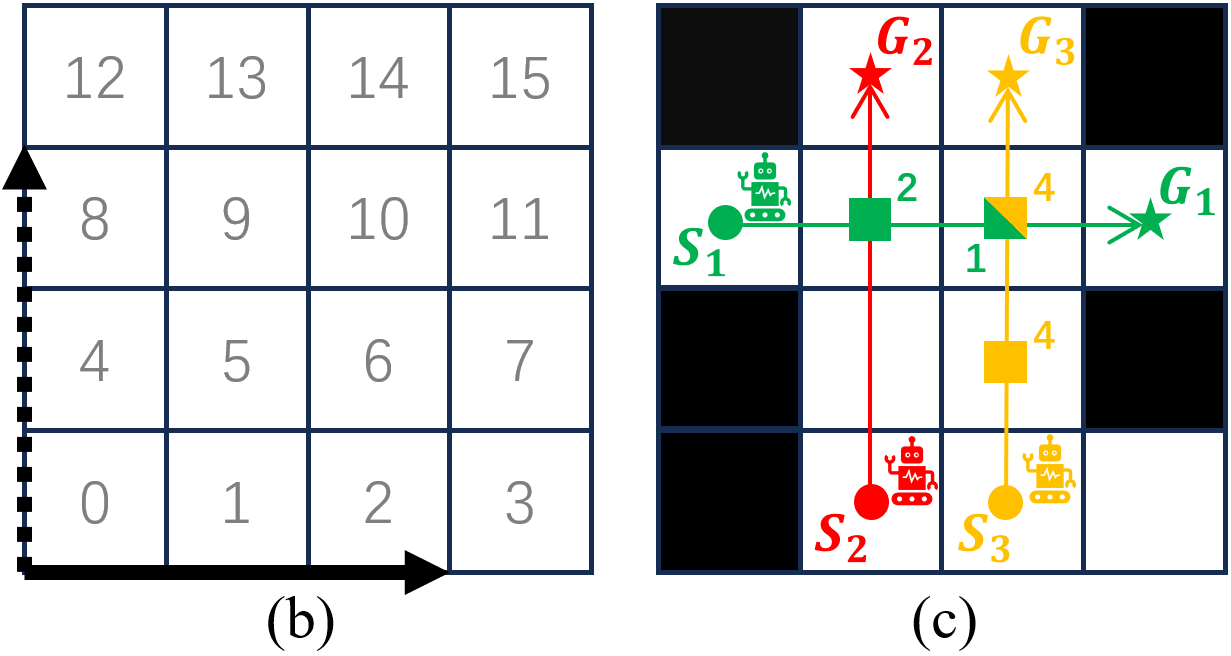}
        \end{minipage}
    }
    \vspace{-0.3cm}
    \caption{A toy example of \abbrMATSPF.
    The target locations with heterogeneous task duration are marked as the white disks in (a). (c) shows the a 4$\times$4 grid representation of the workspace in (a) and each cell is encoded with a number as shown in (b). There are three targets $6$, $9$, $10$, which are marked as square in (c).
    The color of the targets in (c) shows the assignment constraints. For instance, the task at target 10 can only be executed by the green agent with task duration 1, or by the yellow agent with task duration 4. The $S$ and $G$ shows the initial and goal locations of the agents.}
    \label{fig:toy_example}
    \vspace{-0.5cm}
\end{figure}

Although a few approaches have been developed~\cite{honig2018conflict,ma2016optimal,ren2023cbss,ren2021ms,zhang2022multi,surynek2021multi,9197527,ren2024dms} to handle \abbrMCPF and its variants recently, most of them ignores or simplifies the \emph{task duration} at a target location, which is ubiquitous in practice and is the main focus of this paper.
In other words, when a robot reaches a target to execute the task there, it takes time for the robot to finish the task, and during this period, the robot has to occupy that target location and thus blocks the paths of other agents.
Additionally, when sequencing the targets, the task duration must be considered when solving the TSPs to optimally allocating the targets and finding the visiting order.
Furthermore, task duration can be heterogeneous with respect to the agents and targets: different agents may take different duration for the task at the same target, and for the same agent, different targets may require different duration.

To handle task duration, this paper first formulates a new problem variant of \abbrMCPF called \abbrMATSPF, where D stands for heterogeneous task duration (Fig.~\ref{fig:toy_example}).
\abbrMATSPF generalizes \abbrMCPF and is therefore NP-hard to solve to optimality.
We then develop two methods to solve \abbrMATSPF.

The first method has no solution optimality guarantee.
It begins by ignoring the task duration and using an existing planner for \abbrMCPF (such as~\cite{ren2023cbss}) to find a set of paths, and then post-processes the paths to incorporate the task duration while avoiding collision among the agents.
The post-processing leverages \cite{honig2018conflict} to build a temporal planning graph (TPG) that captures the precedence requirement between the waypoints along the agents' paths, and then add the task duration to the target locations while maintaining the precedence requirement to avoid agent-agent collision.
While being able to take advantage of any existing planner for \abbrMCPF, this first method only finds a sub-optimal solution to \abbrMATSPF and the solution cost can be far away from the true optimum especially in the presence of large task duration.
We instantiate this method by using CBSS as the \abbrMCPF planner and name the resulting algorithm \abbrCBSS.

To find an optimal solution for \abbrMATSPF, we develop our second method called Conflict-Based Steiner Search with Task Duration (\abbrCBXS), which is similar to CBSS~\cite{ren2023cbss} by interleaving target sequencing and path planning.
\abbrCBXS considers task duration during planning, and \abbrCBXS differs from CBSS as follows.
First, \abbrCBXS solves TSPs with task duration to find optimal target sequences for the agents to visit, and thus modifies the target sequencing part in CBSS.
Second, when an agent-agent collision is detected during path planning, \abbrCBXS introduces a new branching rule, which is based on the task duration, to resolve the collision more efficiently than using the basic branching rule in CBSS.

We test and compare our two approaches using an online dataset~\cite{stern2019multi}.
As shown by our results, both \abbrCBSS and \abbrCBXS can handle up to 20 agents and 50 targets, and the solution cost returned by \abbrCBXS is up to 20\% cheaper than \abbrCBSS especially when the task duration is large.
Furthermore, the new branching rule in \abbrCBXS is able to help avoid up to 80\% of the planning iterations needed for collision resolution in comparison with the regular branching rule in CBSS.
Finally, we combine \abbrCBXS and TPG to both find high quality paths and execute the paths on robots with motion disturbance or inaccurate task duration, which is validated by our high-fidelity simulation in Gazebo.



	\section{Problem Formulation}\label{matspf:sec:problem}
	
Let index set $I=\{1,2,\dots,N\}$ denote a set of $N$ agents. All agents move in a shared workspace represented as an finite undirected graph $G=(V,E)$, where the vertex set $V$ represents the possible locations for agents and the edge set $E\subseteq V\times V$ denotes the set of all possible actions that can move an agent between two vertices in $V$.
An edge between $u, v \in V$ is denoted as $(u,v)\in E$, and the cost of an edge $e\in E$ is a positive real number $cost(e)\in(0,\infty)$.
To simplify the problem, we consider the case where each edge has a unit cost, which is equal to the traversal time of that edge.

We use a superscript $i\in I$ over a variable to represent the agent to which the variable relates (e.g., $v^i \in V$ means a vertex related to agent $i$). The start (i.e., initial vertex) of agent $i$ is expressed as $v_o^i \in V$, and $V_o \in V$ denotes the set of the starts of all agents. There are $N$ destination vertices in $G$ denoted by the set $V_d \subseteq V$.
The destination vertices are also called goal vertices.
In addition, let $V_t \subseteq V \backslash \{V_o \cup V_d\}$ denote the set of $M$ (intermediate) target vertices.
Each target or destination vertex $v \in V_t \cup V_d$ is associated with a task that must be executed by an agent.
For each $v \in V_t \cup V_d$, let $f_A(v)\subseteq I$ denote the subset of agents that are capable to visit $v$ and execute the task at $v$.
Specifically, for each $v \in V_t \cup V_d$, let \emph{task duration} $\tau^i(v)$, a non-negative integer, denote the amount of time that agent $i \in f_A(v)$ takes to execute the task at $v$.
For the same vertex $v \in V_t \cup V_d$, different agents may take different amount of time to execute the task, and hence heterogeneous task duration.
Any agent $i\in I$ (including $i\notin f_A(v)$) can occupy $v$ along its path without executing the task at $v \in V_t \cup V_d$.

All agents share a global clock.
An agent $i$ has three possible actions $a^i$: move through an edge, wait in the current vertex, or execute the task if the agent is at $v\in V_t \cup V_d$.
Here, both move and wait take a unit time and executing the task takes the amount of time indicated by the task duration.
The cost of an action $cost(a^i)$ is the amount of time taken by that action.
Let $\pi^i(v_1^i,v_k^i) := (a_1^i, a_2^i, \dots, a_\ell^i)$ denote a path for agent $i$ between vertices $v_1^i$ and $v_k^i$ in $G$, where $a_k^i, k=1,2,\dots,\ell$ denote an action of the agent.
Let $g(\pi^i(v_1^i,v_k^i))$ denote the cost of the path, which is the sum of the costs of all the actions taken by the agent in the path: $g(\pi^i(v_1^i,v_k^i))=\sum_{k=1,2,\dots,\ell-1}cost(a_k^i)$.

Any two agents $i,j \in I$ are in conflict for any of the following two cases. 
The first case is an \emph{edge conflict} $(i,j,e,t)$, where two agents $i,j\in I$ go through the same edge $e$ from opposite directions between times $t$ and $t+1$.
The second case is a \emph{vertex conflict} $(i, j, v, t)$, where two agents $i,j\in I$ occupy the same vertex $v$ at the same time $t$.
Note that due to task duration, vertex conflicts include the case where an agent $i\in I$ is executing a task at some $v \in V_t \cup V_d$ within the time range $[t,t+\tau^i(v)]$, and another agent $j \in I, j\neq i$ occupies $v$ at some time $t' \in [t,t+\tau^i(v)]$.

The \abbrMATSPF problem aims to find a set of conflict-free paths for the agents such that: (i) the task at any $v\in V_t \cup V_d$ is executed by an eligible agent $i\in f_A(v)$; (ii) each agent $i\in I$ starts its path from $v^i_o$ and ends at a unique goal $u\in V_d$ such that $i\in f_A(u)$; (iii) the sum of the path cost of all agents reaches the minimum.


\begin{remark}
    When $\tau^i(v)=0$ for all $v\in V_t\cup V_d, i\in f_A(v)$, \abbrMATSPF becomes the existing \abbrMCPF problem \cite{ren2023cbss}.
    Here, a path $\pi^i$ is represented by a list of actions of agent $i$, and such a $\pi^i$ can always to converted to a list of vertices $\pi^i=(v^i_1,v^i_2,\dots,v^i_k)$ (not vice versa), where the subscripts indicate the consecutive time steps and the vertices indicate the locations of the agent at the corresponding time steps. For the rest of this paper, we use both representation interchangeably.
\end{remark}

	\section{Preliminaries}\label{matspf:sec:preli}
	
\subsection{Conflict-Based Steiner Search}
\label{subsec:CBSS}

\subsubsection{Overview}

\cite{ren2023cbss} for \abbrMCPF is shown in Alg.~\ref{alg:CBSS},\footnote{In Alg.~\ref{alg:CBSS}, Line~\ref{alg1:line2} and \ref{alg1:line12} are marked in blue indicating the differences between CBSS and \abbrCBXS, which will be explained in Sec. \ref{matspf:sec:cbxs}.} which interleaves target sequencing and path planning as follows (Fig.~\ref{fig:cbss}).
CBSS creates a complete undirected target graph $G_T=(V_T,E_T,C_T)$ with the vertex set $V_T = Vo \bigcup Vt \bigcup Vd (|V_T| = 2N + M )$ and edge set $E_T$ (Line~\ref{alg1:line1}).
Here, $C_T$ represents a symmetric cost matrix of size $(2N + M ) \times (2N + M )$ that defines the cost of each edge  in $E_T$, which is the minimal path cost between the two vertices in the workspace graph $G$.
CBSS then ignores any conflict between the agents and solve a mTSP on $G_T$ to find target sequences that specify the allocation and visiting order of the targets for each agent (Line~\ref{alg1:line2}, Sec.~\ref{cbssd:sec:method:K-best-seq}).
Next, CBSS fixes the target sequence, plans the corresponding paths, and then resolves conflicts along the paths by using Conflict-Based Search (Line~\ref{alg1:line3}-\ref{alg1:line17}, Sec.~\ref{cbssd:sec:method:conflict_resolution}).
Finally, CBSS alternates between resolving conflicts along paths and generating new target sequences until an optimal solution is found.

\begin{figure}[tb]
    \centering
    \includegraphics[width=0.9\linewidth]{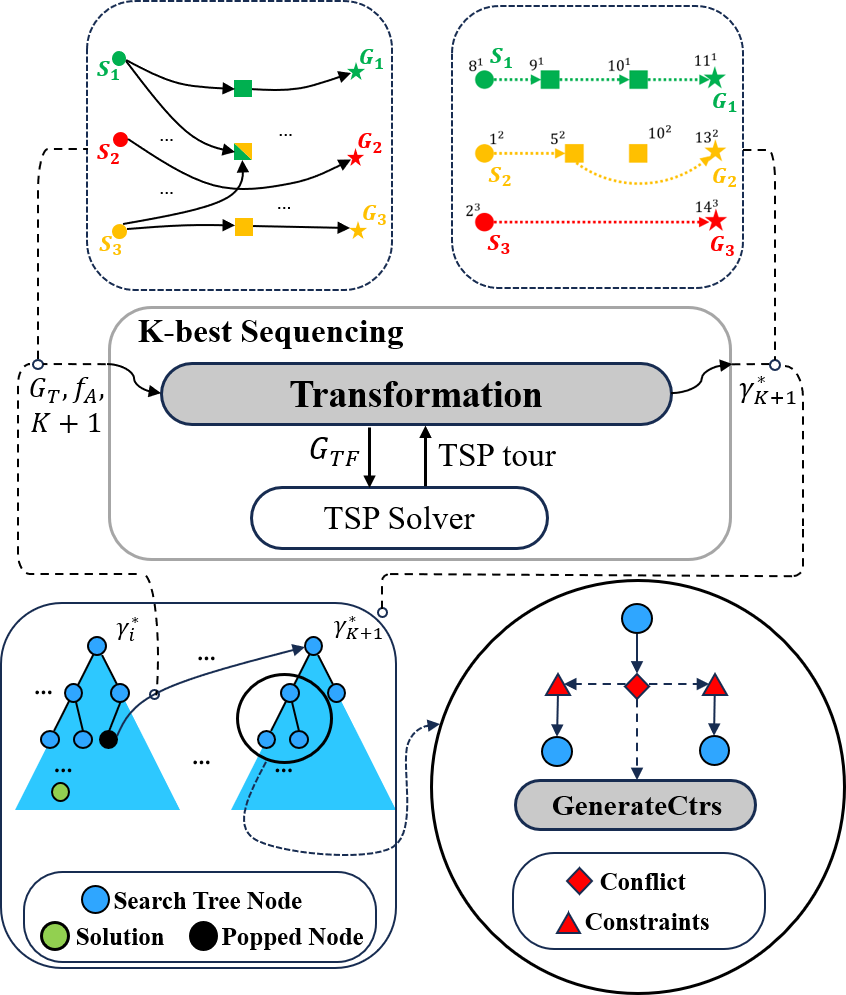}
    \caption{An illustration of CBSS and \abbrCBXS. Both of them generate K-best target sequences and leverage CBS to resolve conflicts between agents. The differences between them are represented by the transformation and constraints generation, which are highlighted by gray-filled text boxes.}
    \label{fig:cbss}
    \vspace{-0.2cm}
\end{figure}

\begin{algorithm}[tb]
    \small
	\caption{Pseudocode for CBSS (CBSS-D)}
    \label{alg:CBSS}
	\begin{algorithmic}[1]
		%
		\State{$G_T=(V_T,E_T,C_T) \gets$ ComputeGraph($G$)} \label{alg1:line1}
		\State{\blue $\gamma^*_1 \gets$ \textit{K-best-Sequencing}($G_T$,$f_A$,$K=1$)} \label{alg1:line2}
		\State{$\Omega \gets \emptyset$} \label{alg1:line3}
		\State{$\pi_{},g_{} \gets$ \textit{LowLevelPlan}($\gamma^*_1$, $\Omega_{}$)}
		\State{Add $P_{root,1}=(\pi,g,\Omega)$ to OPEN}
		\While{OPEN is not empty} 
		\State{$P_l=(\pi_l,g_l,\Omega_l) \gets$ OPEN.pop() }
		\State{$P_k=(\pi_k,g_k,\Omega_k) \gets$ \textit{CheckNewRoot}($P_l$, OPEN) }
		\State{$cft \gets$ \textit{DetectConflict}($\pi_k$)}
		\State{\textbf{if} $cft = NULL$ \textbf{then}}
		\State{\indent \textbf{return} $\pi_k$ }
		\State{\blue $\Omega \gets $ \textit{GenerateConstraints}($cft$)} \label{alg1:line12}
		\ForAll{$\omega^i \in \Omega$}
		\State{$\Omega'_k = \Omega_k \cup \{\omega^i\}$}
		\State{$\pi'_{k},g'_k \gets$ \textit{LowLevelPlan}($\gamma(P_k)$, $\Omega'_k$)}
		\State{\small// In this LowLevelPlan, only agent $i$'s path is planned.}
		\State{Add $P'_{k}=(\pi'_k,g'_k,\Omega'_k)$ to OPEN} \label{alg1:line17}
		\EndFor
		\EndWhile \label{}
		\State{\textbf{return} failure}
	\end{algorithmic}
\end{algorithm}
\vspace{-0.cm}
\subsubsection{K-best joint sequences}\label{cbssd:sec:method:K-best-seq}
To generate new target sequences, CBSS solves a K-best TSP, which requires finding a set of K cheapest target sequences.
Specifically, let $\gamma^i = \{v_0^i, u_1^i, u_2^i,...,u_l^i,v_d^i\}$ denote a \textit{target sequence} visited by agent $i \in I$, where $v_0^i$ is the start of agent $i$, $u_j^i$ is the $j$-th target visited by agent $i$ with $j=1,...,l$ and $v_d^i \in V_d$ is the goal of agent $i$.
Let $\gamma = \{\gamma^i:i\in I\}$ denote a \textit{joint (target) sequence}, which is a collection of target sequences of all agents.
The cost of a joint sequence is defined as $cost(\gamma):=\sum_{i\in I}cost(\gamma^i)$, where the cost between any two targets $u,v \in \gamma^i$ is set to be the minimum path cost from $u$ to $v$ in $G$.
Here, CBSS seeks a set of K-best joint sequences $\gamma_1,\gamma_2,...,\gamma_k$, whose costs are monotonically non-decreasing: $cost(\gamma_1^*) \leq cost(\gamma_2^*) \leq ... \leq cost(\gamma_k^*)$.

To find K-best joint sequences, CBSS first leverages a transformation method in \cite{junger1995traveling} to convert a mTSP to an equivalent (single agent) TSP so that the existing TSP solver (such as LKH~\cite{lkh}) can be used. The resulting solution to the TSP can then be un-transformed to obtain the joint sequence to the original mTSP.
Additionally, to find K-best joint sequences, CBSS leverages a partition method~\cite{VANDERPOORT1999409,Hamacher1985KBS}, which iteratively creates new TSPs in a systematic way based on the first $(K-1)$-best joint sequences, obtains an optimal solution to each of the new TSPs, and picks the cheapest solution as the $K$-th best solution.

\subsubsection{Conflict Resolution}\label{cbssd:sec:method:conflict_resolution}
For each joint sequence, CBSS uses CBS, a two-level search that creates a search tree, to resolve conflicts and plan paths.
Each node $P$ in a tree is defined as a tuple of $(\pi, g, \Omega)$, where: $\pi = (\pi^1, \pi^2,..., \pi^N)$ is a joint path, a collection of all agents' paths; $g$ is the scalar cost value of $\pi$, i.e., $g=g(\pi)=\sum_{i\in I}g(\pi^i)$; and $\Omega$ is a set of (path) constraints,\footnote{For the rest of this article, we refer to path constraints simply as constraints, which differs from the aforementioned assignment constraint.} each of which is $(i,v,t)$ (or $(i,e,t)$) and indicates that agent $i$ is forbidden to occupy vertex $v$ (or traverse edge $e$) at time $t$.

We first describe CBSS when there is only one joint sequence and then describe when to generate new joint sequences.
With the first computed joint sequence $\gamma_1^*$, a joint path $\pi_1$ is planned by running some low-level (single-agent) planner such as $A^*$ for each agent, while visiting the targets in the same order as in $\gamma_1^*$.
Then, a node corresponding to $\gamma_1^*$ is created, which becomes the root node of the first tree.
If no conflict is detected between agents in $\pi_1$, the search terminates and return $\pi_1$, which is an optimal solution to \abbrMCPF.
Otherwise (i.e., $\pi_1$ has a conflict, say$(i,j,v,t)$), then two new constraints $(i,v,t)$ and $(j,v,t)$ are created as in CBS\cite{sharon2015conflict}.
For each new constraint (say $(i,v,t)$), the low-level planner is invoked for agent $i$ to find a minimum-cost path that satisfies all constraints that have been imposed on agent $i$, and follows the same target sequence as in $\gamma_1^*$.
Then, a corresponding node is generated and added to OPEN, where OPEN is a queue that prioritizes nodes based on their $g$-values from the minimum to the maximum.
In the next search iteration, a node with the minimum cost is popped from OPEN for conflict detection and resolution again.
The entire search terminates until a node with conflict-free joint path is popped from OPEN, which is returned as the solution.

CBSS generates new joint sequences when needed during this search process.
When a node $P=(\pi, g(\pi), \Omega)$ is popped from OPEN.
If $g(\pi)$ is no larger than the cost of a next-best joint sequence (say $cost(\gamma_2^*)$), then, the search continues to detect and resolve conflict as aforementioned.
Otherwise (i.e., $g(\pi) > \gamma_2^*$), a new (root) node (of a new tree) is created, where paths of agents are planned based on $\gamma_2^*$.
Then, the same conflict detection and resolution process follows.
Note that since $cost(\gamma_1^*)$ is a lower bound to the optimal solution cost of \abbrMCPF, and that the nodes are systematically generated and expanded in a best-first search manner, CBSS finds an optimal solution to the \abbrMCPF \cite{ren2023cbss}.


\subsection{Temporal Plan Graph}
\label{TPG_pre}
To handle kinematic constraints of robots, Temporal Plan Graph (TPG) \cite{honig2016multi} was developed to post-process the joint path output by a \abbrMAPF (or \abbrMCPF) planner.
TPG converts a joint path to a directed acyclic graph $\mathcal{G}=(\mathcal{V},\mathcal{E})$, where each vertex $s\in \mathcal{V}$ represents an event\footnote{$s$ denotes a vertex in $\mathcal{G}$ and $v$ denotes a vertex in $G$.} that an agent enters a location, and each directed edge $e = (s,s^{'}) \in \mathcal{E}$ indicates a temporal precedence between events: event $s{'}$ happen after event $s$. There are two types of edges in $\mathcal{G}$ and the intuition can be described as follows.
Given a conflict-free joint path $\pi = (\pi^1, \pi^2, \dots, \pi^N)$ output by a \abbrMAPF planner.
\begin{itemize}
    \item Type 1: Each agent $i\in I$ enters locations in the same order given by its path $\pi^i$;
    \item Type 2: Any two agents $i,j \in I, i\neq j$ enter the same location in the same order as in their paths $\pi^i$ and $\pi^j$.
\end{itemize}
Let $\pi^i = (v_0^i, v_1^i, ..., v_T^i)$ denote agent $i$'s path in $\pi$, where $s_t^i$ denotes the location agent $i$ reaches at time $t$.
Let $T$ denote the largest arrival time among all agents, and for those agents that arrive at their goals before $T$, their paths are prolonged by letting them wait at their goals until $T$.
A TPG is then constructs as follows.
To create vertices and Type 1 edges in TPG, for each agent $i$, we extract a route $r^i$ from path $\pi^i$ by removing the wait actions (i.e., keeping only the first of the consecutive identical locations in $\pi^i$).
All locations in the routes $r^1, r^2, ..., r^N$ constitute the vertices of TPG, denoted by $s_t^i$ with $i \in I$ and $t \in [0,T]$ indicating that agent $i$ occupies location $v_t^i$ at time $t$. For each pair of two consecutive vertices $s_t^i$ and $s_{t'}^i$ in the route $r^i$, a Type 1 edge $(s_t^i, s_{t'}^i)$ is created, indicating that agent $i$ enters $s_{t}^i$ before entering $s_{t'}^i$. For each pair of two identical locations $s_t^i=s_{t'}^j$ on two different routes ($i\neq j$) with $t < t'$, a Type 2 edge $(s_t^i, s_{t'}^j)$ is created, indicating that agent $i$ enters $s_{t}^i$ before agent $j$ enters $s_{t'}^j$.

The resulting TPG encodes the precedence requirement that ensures the collision-free execution of the joint path $\pi$.
In \cite{honig2016multi}, a TPG is used to handle the kinematic constraints of the robots to ensure robust execution of the paths, while in this paper, we use TPG, combined with CBSS, as our first method to handle task duration in \abbrMATSPF, which is presented next.
	
	\section{\abbrCBSS}\label{matspf:sec:cbssstn}
	
\subsection{Algorithm Description}


\abbrCBSS begins by ignoring all task duration (i.e., set all task duration to 0), which yields a \abbrMCPF problem. Then CBSS is applied to obtain a conflict-free joint path for this \abbrMCPF problem, denoted by $\pi_a = (\pi_a^1, \pi_a^2, ..., \pi_a^N)$.
Then, we propose \abbrTPGD to post-process the joint path $\pi_a$ by incorporating the task duration and avoid additional conflicts brought by the task duration, explained in Sec. \ref{sec:matspf:cbsstpg:2}.

\begin{remark}
After the first step in \abbrCBSS where a joint path is obtained, the remaining problem is how to incorporate the task duration into that joint path while avoiding conflicts.
A naive approach is, along the joint path $\pi_a$, let an agent $i$ stay at a target vertex $v$ until the task at $v$ is finished. Meanwhile, let all other agents wait at their current vertex until agent $i$ finishes the task, and then let all other agents start to move again.
This naive method would lead to many unnecessary wait actions.
We therefore develop \abbrTPGD, which is able to identify a small subset of agents that are affected by a task, and only let these affected agents wait until that task is finished.
\end{remark}

\subsection{TPG Post-Process}\label{sec:matspf:cbsstpg:2}
Our method \abbrTPGD (Alg. \ref{alg:tpg}) takes as input a joint path $\pi$ output by CBSS, task duration $\tau$ and a TPG $\mathcal{G}$ that is constructed by the method in~\cite{honig2016multi}, and returns conflict-free joint path $\pi_c$ with the task duration incorporated.
\abbrTPGD is based on the following property of TPG:
Recall that the directed edges in $\mathcal{G}$ indicates the precedence between events.
When an event happens (i.e., an agent has reached a location $s^i_t$), both $s^i_t$ and all out-going edges of $s^i_t$ can be deleted from $\mathcal{G}$ to remove the corresponding precedence constraints.
Then, an event has no precedence constraint (and can take place) if the corresponding vertex $s_t^i$ has zero in-degree in $\mathcal{G}$.

Based on this property, \abbrTPGD further introduces \textit{duration value} $D(s_t^i)$ for each vertex $s_t^i \in \mathcal{V}$ to handle task duration, where $D(s_t^i)$ represents the time agent $i$ should spend at location $s_t^i$.
Here, $D(s_t^i)$ includes both the waiting time (if any) and the task duration of agent $i$ at $s_t^i$.
\abbrTPGD is an iterative algorithm, and in any iteration, let $L_0$ denote the list of vertices in $\mathcal{G}$ with zero in-degree, and let $L_d$ denote a list of vertices to be deleted in $\mathcal{G}$. 

At the beginning, \abbrTPGD initializes $\pi_c$ (the joint path to be returned) with $\emptyset$ and $v_c$ (the current joint vertex of all agents) with the starting vertices of all agents (Line~\ref{alg2:line3}-\ref{alg2:line4}).
\abbrTPGD initializes $D(s^i_t)$ for all $s^i_t \in \mathcal{G}$ by finding the waiting time and the task duration at $s^i_t$ based on $\pi_a$ and $\tau$.
If there is no wait or no task duration related to $s^i_t$, procedures WaitTime and TaskDuration simply return zero and $D(s^i_t)=0$.

In each iteration (Line~\ref{alg2:line6}-\ref{alg2:line19}), \abbrTPGD first finds the set $L_0$ of all vertices  with zero in-degree in $\mathcal{G}$ (Line~\ref{alg2:line7}).
Then, for each vertex $s^i_t$ in $L_0$, \abbrTPGD reduces its $D$-value by one unless $D(s^i_t)$ is already zero.
Then, \abbrTPGD finds all $s^i_t$ in $L_0$ that should be deleted from $\mathcal{G}$ (Line~\ref{alg2:line13}-\ref{alg2:line15}) via procedure CheckDelete and add them into $L_d$.
CheckDelete can be implemented in different ways~\cite{honig2016multi,honig2019persistent} and we consider the following three conditions when a vertex $s^i_t$ can be deleted: (i) the in-degree of $s^i_t$ is zero; (ii) $D(s^i_t)=0$ and (iii) the agent can reach the next vertex $s^i_{t+1}$ without colliding with any other agents.
Here, (iii) is needed since another agent $j$ may stay at vertex $s^j_{t'}=s^i_{t+1}$ for task execution, and if agent $i$ leaves $s^i_t$, $i$ cannot reach $s^i_{t+1}$.
Then, \abbrTPGD deletes all vertices in $L_d$ from $\mathcal{G}$ and append $v^i_c \in v_c$ (the current vertex of each agent) to the end of $\pi^i_c$ for each $i\in I$, if $i$ has not reached its goal yet.
In the next iteration, since vertices in $L_d$ are deleted from $\mathcal{G}$, $L_0$ will change and $v_c$ will be updated correspondingly.
\abbrTPGD terminates when $\mathcal{G}$ is empty, which means all agents have reached their goals (Line~\ref{alg2:line6}).

\begin{algorithm}[tb]
\caption{Pseudocode for \abbrTPGD}
\label{alg:tpg}
\begin{algorithmic}[1]
\small
    \State{\textbf{Input}: A TPG~$\mathcal{G}$, a joint path $\pi$ output by CBSS, and all task duration $\tau$.}
    \State{\textbf{Output}: A conflict-free joint path $\pi_c$ where each agent follows the same visiting order in $\mathcal{G}$}
    \State $\pi_c \gets (\emptyset, \emptyset,..., \emptyset)$ \label{alg2:line3}
    \State $v_c \gets (v_0^0, v_0^1,..., v_0^N)$  \label{alg2:line4} 
    \State $D(s^i_t) \gets $ WaitTime($v^i_t$, $\pi$) + TaskDuration($v^i_t$,$\pi$,$\tau$)
    \While{$\mathcal{G} \neq \emptyset$} \label{alg2:line6}
        \State{$L_0 \gets \textit{ZeroInDegVertices}(\mathcal{G})$} \label{alg2:line7}
        \State{$L_d \gets \emptyset$}
        \ForAll{$s_t^j \in L_0$}
        \State $v_c^j \gets s^j_t$

        \If{$D(s_t^j) > 0$}
        \State{$D(s_t^j) = D(s_t^j) - 1$}
        \EndIf
        \EndFor

        \ForAll{$s_t^j \in L_0$} \label{alg2:line13}
        \If{\textit{CheckDelete($s_t^j$)}}
        \State Add $s_t^j$ to $L_d$
        \EndIf
        \EndFor \label{alg2:line15}
        
        \State Delete all $s \in L_d$ from $\mathcal{G}$ \label{alg2:line16}

        \ForAll{$i \gets 1~\textbf{to}~N$}
        \If{Agent $i$ has not reached the end of $\pi^i$}
        \State Append $v_c^i$ to the end of $\pi_c^i$
        \EndIf
        \EndFor
    \EndWhile \label{alg2:line19}
    \State \Return $\pi_c$ \Comment{Final conflict-free joint path}
\end{algorithmic}
\end{algorithm}

\begin{example}
For the example in Fig.~\ref{fig:toy_example}, after invoking CBSS, the resulting joint sequence is $\gamma^1=(8,9,10,11)$, $\gamma^2=(1,13)$, $\gamma^3=(2,6,14)$ and the resulting joint path is $\pi^1_a=(8,9,10,11)$, $\pi^2_a=(1,5,9,13)$, $\pi^3_a=(2,6,6,10,14))$.
If one simply add the task duration back to $\pi_a$, the resulting joint path $\pi^0_b=((8,9,9,9,10,10,11)$, $\pi^1_b=(1,5,9,13)$, $\pi^2_b=(2,6,6,6,6,6,6,10,14)$ is not conflict-free.
Fig.~\ref{fig:tpg} shows the constructed TPG by the method in~\cite{honig2016multi} and the blue number on the upper right corner of each circle in Fig.~\ref{fig:tpg} shows the $D$-value introduced by our \abbrTPGD.
The returned joint path by \abbrTPGD is $\pi_c^0=(8,9,9,9,10,10,11)$, $\pi_c^1=(1,5,5,5,9,13)$, $\pi_c^2=(2,6,6,6,6,6,6,10,14))$, which is conflict-free.
\end{example}

\begin{figure}[tb]
    \centering
    \includegraphics[width=.9\linewidth]{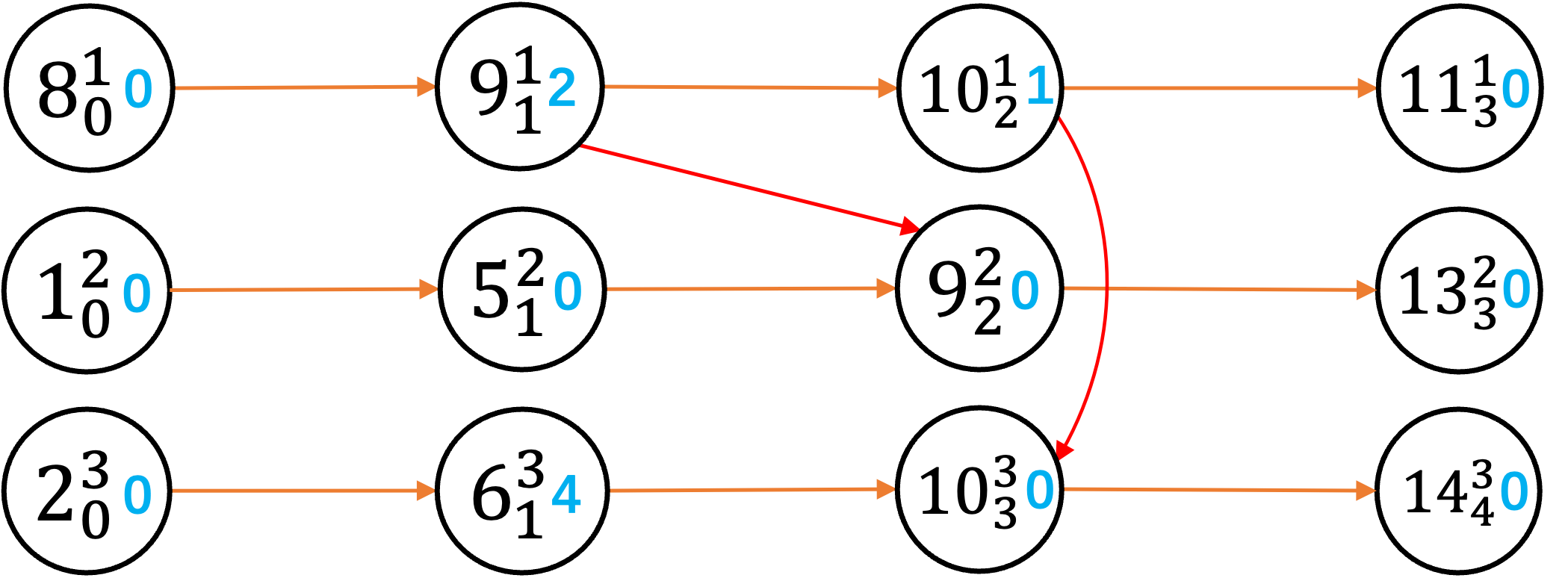}
    \caption{TPG of the example in Fig.~\ref{fig:toy_example}. Vertex $s_t^i$ means agent $i$ moves to location $s$ at time $t$, and the number in blue is the value of $D(s_t^i)$. The orange arrows represent Type 1 edges and the red arrows represent Type 2 edges.}
    \label{fig:tpg}
    \vspace{-0.4cm}
\end{figure}

\subsection{Properties}

We say a joint path $\pi_1$ follow the same visiting order as $\pi_2$, if the following conditions hold: 
(i) every vertex that appears in $\pi_1^i$ also appear in $\pi_2^i$ for all $i\in I$; 
(ii) if vertex $u$ appears after $v$ in $\pi_1^i$, then $u$ also appears after $v$ in $\pi_2^i$ for all $i\in I$; 
(iii) if a vertex $v$ appears in two agents' paths $\pi_1^i,\pi_1^j, i\neq j$ and $j$ visits $v$ after $i$ has visited $v$, then $j$ also visits $v$ after $i$ has visited $v$ in $\pi_2^i,\pi_2^j$. Note that two joint paths with the same visiting order may differ due to variations in the waiting periods experienced by some agents along each path. 

The construction of the TPG $\mathcal{G}$ captures the precedence between the agents in $\pi$ using Type 1 and Type 2 edges \cite{honig2016multi}.
The $\pi_c$ returned by \abbrTPGD preserves the precedence in $\pi$ by iteratively deleting and appending the zero in-degree vertices (via \textit{CheckDelete}) to the returned joint path.
By enforcing the precedence, $\pi_c$ follows the same visiting order as $\pi$.
We summarize this property in the following theorem.
\begin{theorem}
\label{theo:order}
Let $\pi$, $\tau$ and $\mathcal{G}$ denote the input to \abbrTPGD. Then, $\pi_c$ returned by \abbrTPGD follows the same visiting order as $\pi$.
\end{theorem}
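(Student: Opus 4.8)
The plan is to prove Theorem~\ref{theo:order} by showing that the three conditions in the definition of ``follows the same visiting order'' each follow directly from structural invariants of \abbrTPGD, and that these invariants are in turn inherited from the Type~1 and Type~2 edges of the TPG~$\mathcal{G}$. The central observation I would exploit is that \abbrTPGD appends a vertex $v_c^i$ to $\pi_c^i$ exactly when the corresponding TPG vertex $s_t^i$ has been (or is about to be) deleted, and that \textit{CheckDelete} only permits deletion of a zero-in-degree vertex. Since an edge $(s,s')$ in $\mathcal{G}$ encodes ``$s'$ happens after $s$,'' a vertex cannot reach zero in-degree until all its predecessors have been deleted. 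Thus the deletion order of the vertices of $\mathcal{G}$ is a topological order consistent with all edges, and $\pi_c$ records the agents' locations precisely in this deletion order. The strategy is therefore to establish a single lemma-style invariant---\emph{a vertex $s_t^i$ is appended to $\pi_c$ only after every in-neighbor of $s_t^i$ in $\mathcal{G}$ has already been appended}---and then derive conditions (i)--(iii) from it.

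First I would argue condition (i): every vertex appearing in $\pi_c^i$ also appears in $\pi^i$, and conversely. By construction the vertex set of $\mathcal{G}$ is exactly the set of (de-duplicated) locations in the routes $r^1,\dots,r^N$ extracted from $\pi$, and \abbrTPGD appends to $\pi_c^i$ only locations $v_c^i = s_t^i$ drawn from $\mathcal{G}$. Because the \texttt{while} loop terminates only when $\mathcal{G}$ is empty, every vertex of $\mathcal{G}$ is eventually deleted and hence every location of $r^i$ is appended, so the location sets of $\pi_c^i$ and $\pi^i$ coincide (modulo repeated wait/duration copies, which do not introduce new locations). Next, for condition (ii), I would use the Type~1 edges: for two consecutive route vertices $s_t^i, s_{t'}^i$ there is an edge $(s_t^i, s_{t'}^i)$, so by the invariant $s_t^i$ is appended before $s_{t'}^i$; transitivity along the route then shows that if $u$ appears after $v$ in $\pi^i$ it also appears after $v$ in $\pi_c^i$. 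Condition (iii), the inter-agent ordering, follows identically from the Type~2 edges: if a shared location $v$ is visited by $i$ before $j$ in $\pi$, then $\mathcal{G}$ contains a Type~2 edge $(s_t^i, s_{t'}^j)$ with $t<t'$, and the invariant forces $s_t^i$ to be appended to $\pi_c^i$ before $s_{t'}^j$ is appended to $\pi_c^j$, preserving the relative visiting order across agents.

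To close the argument I must actually prove the invariant, which I would do by induction on the iteration count of the \texttt{while} loop, using the fact that \textit{CheckDelete} requires zero in-degree (condition (i) of its three conditions) as the base guarantee: a vertex with a surviving in-edge from an as-yet-undeleted predecessor has positive in-degree and so cannot be deleted or appended this iteration. Since edges of $\mathcal{G}$ are removed only when their tail is deleted, an in-edge of $s_t^i$ disappears precisely when its source is appended; hence $s_t^i$ attains zero in-degree only after all its predecessors have been appended, which is exactly the invariant. I would also note that the $D$-value decrement and the motion-feasibility clause (condition (iii) of \textit{CheckDelete}) only \emph{delay} a deletion---they never cause a vertex to be deleted out of topological order---so they preserve rather than break the invariant; this lets me treat them as irrelevant to the ordering claim.

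The main obstacle I anticipate is handling the duplicated vertices arising from waiting and task duration cleanly. The routes $r^i$ are de-duplicated (wait actions stripped), but $\pi^i$, $\pi_c^i$, and the ``visiting order'' relation are stated over full paths that may repeat a location over consecutive time steps, so I need to make precise the identification between a single TPG vertex $s_t^i$ and the block of identical consecutive entries it generates in $\pi_c^i$ via the $D$-value countdown. The care point is to confirm that a vertex is appended on every iteration during which its $D$-value is being decremented (so the block of repeated locations is contiguous and correctly timed) yet is only \emph{deleted}---and thus allows successors to proceed---once $D$ reaches zero, so that the ordering relation is read off from first-occurrence times consistently with the definition. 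Once this bookkeeping between $\mathcal{G}$-vertices and path-blocks is pinned down, conditions (i)--(iii) reduce to the topological-order invariant established above, and the theorem follows.
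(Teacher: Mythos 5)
Your proposal is correct and takes essentially the same approach as the paper: the paper's own justification is precisely that the Type~1 and Type~2 edges of $\mathcal{G}$ encode the precedence in $\pi$, and that \abbrTPGD preserves this precedence by only appending and deleting zero in-degree vertices---exactly the topological-order invariant you formalize. Your version simply makes rigorous (via the induction on loop iterations and the bookkeeping between TPG vertices and repeated path entries) what the paper states as a brief prose argument.
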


Theorem~\ref{theo:order} ensures that all agents eventually reach their respective goals along $\pi_c$.
Additionally, the third condition in \textit{CheckDelete} in \abbrTPGD ensures that an agent does not move to its next vertex along its path unless the agent can reach it without conflict.
Therefore, we have the following theorem.

\begin{theorem}
\label{theo:tpgd-conflict-free}
The joint path $\pi_c$ generated by \abbrTPGD is conflict-free solution to \abbrMATSPF.
\end{theorem}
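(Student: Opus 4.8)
The plan is to establish two things separately: that $\pi_c$ is a \emph{feasible} solution to \abbrMATSPF (it satisfies the assignment and start/goal constraints with all task durations honored) and that it is \emph{conflict-free}. The input $\pi$ is a feasible, conflict-free solution to the duration-free \abbrMCPF relaxation produced by CBSS, so each agent $i$ in $\pi$ visits exactly its assigned targets in a valid order and terminates at an eligible goal $u \in V_d$ with $i \in f_A(u)$. For feasibility of $\pi_c$ I would combine this with Theorem~\ref{theo:order} and the structure of \abbrTPGD: since $\mathcal{G}$ contains every location of each route $r^i$ and the main loop runs until $\mathcal{G}$ is empty, every event vertex is appended to $\pi_c$ as the current vertex $v_c^i$ before being deleted, so each agent in $\pi_c$ visits every location it visits in $\pi$; Theorem~\ref{theo:order} then guarantees that no extra locations are introduced and that the relative order is preserved, so conditions (i) and (ii) carry over to $\pi_c$. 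I would additionally argue that durations are honored: each $s^i_t$ is initialized with $D(s^i_t) \ge \tau^i(v^i_t)$, and \textit{CheckDelete} can fire only when $D(s^i_t)=0$; since $v_c^i$ stays fixed at $v^i_t$ while $D(s^i_t)>0$ and is appended once per iteration, agent $i$ occupies $v^i_t$ for at least $\tau^i(v^i_t)$ consecutive steps, so the task at $v^i_t$ is executed in full.

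For conflict-freeness I would split into vertex conflicts (including the duration-induced case) and edge/swap conflicts. For a vertex conflict, suppose two agents $i,j$ were to occupy the same location $v$ over overlapping time intervals. The TPG construction places a Type~2 edge between the two events at $v$, oriented by the order in $\pi$; say $i$ precedes $j$. That edge keeps $j$'s event vertex at positive in-degree until $i$'s event vertex is deleted, and by the argument above $i$'s vertex is deleted only after its $D$-value reaches zero, i.e.\ after $i$ has finished occupying $v$ including any task execution. Since $v_c^j$ only advances to $v$ once $j$'s event vertex enters $L_0$, agent $j$ cannot be placed at $v$ until the iteration after $i$ leaves, yielding the required temporal separation. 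For edge/swap conflicts I would invoke condition~(iii) of \textit{CheckDelete} directly: an agent is never allowed to vacate $s^i_t$ and advance to $s^i_{t+1}$ unless it can reach $s^i_{t+1}$ without colliding, which rules out both stepping onto an occupied vertex and two agents exchanging vertices in a single step.

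The main obstacle I anticipate is the vertex-conflict argument in the presence of heterogeneous durations, because reintroducing durations through the $D$-values changes the timing relative to the duration-free $\pi$ from which $\mathcal{G}$ was built, and one must verify that this retiming cannot create a conflict that the original Type~2 edges did not anticipate. The resolution is precisely condition~(iii) of \textit{CheckDelete}, so I would make the interaction between the $D$-value decrement, the Type~2 precedences preserved by Theorem~\ref{theo:order}, and the safety check in \textit{CheckDelete} fully explicit, tracking iteration indices carefully to exhibit the one-iteration gap between agent $i$ vacating $v$ and agent $j$ entering it. A minor supporting point is that $\mathcal{G}$ is acyclic (a standard property of the construction), so that $L_0$ is nonempty while $\mathcal{G}\neq\emptyset$ and the loop makes progress; this guarantees every event is eventually processed and, together with Theorem~\ref{theo:order}, that each agent's goal is the last vertex appended to $\pi_c^i$.
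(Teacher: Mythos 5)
Your proposal is correct and takes essentially the same route as the paper: the paper's own justification (the two sentences immediately preceding the theorem) is exactly your skeleton---Theorem~\ref{theo:order} guarantees each agent preserves its visiting order and eventually reaches its goal, while condition (iii) of \textit{CheckDelete} guarantees no agent advances to its next vertex unless it can do so without colliding. Your write-up simply fills in details the paper leaves implicit (the $D$-value accounting for task durations, the Type 2 edge precedence argument, acyclicity for loop progress), and it correctly identifies condition (iii) as the step that handles conflicts introduced by retiming, which is precisely the role the paper assigns to it.
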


\abbrCBSS can't guarantee the optimality of the returned solution $\pi_c$.
Next, we will introduce our second method \abbrCBXS which finds an optimal solution for \abbrMATSPF.

    \section{\abbrCBXS}\label{matspf:sec:cbxs}
	

There are two main differences between \abbrCBXS and CBSS: First, \abbrCBXS modifies the transformation method for target sequencing to handle task duration (Sec.~\ref{subsec:transformation});
Second, when a conflict is detected, \abbrCBXS introduces a new branching rule, which leverages the knowledge of task duration, to resolve the conflict more efficiently than using the basic branching rule in CBSS (Sec.~\ref{subsec:branchingrules}).



\subsection{Transformation for Sequencing}
\label{subsec:transformation}

As aforementioned,
to find K-best joint sequences, CBSS first leverages a transformation method to convert a mTSP to an equivalent (single agent) TSP, then uses a partition method to solve a K-best (single agent) TSP, and then un-transform each of the obtained K-best (single agent) tours to a joint sequence.
Here, \abbrCBXS only modifies the transformation method while the partition method remains the same.

The transformation method first creates a transformed graph $G_{TF}$ out of the target graph $G_T$ by defining the vertex set $V_{TF}$ as $V_{TF} := V_0 \bigcup U$, where $U$ is an augmented set of targets and goals: for each $v \in V_t \bigcup V_d$, a copy of $v^i$ of $v$ for agent $i$ is created if $i \in f_A(v)$.
Making these copies of each target $v$ for agents in $f_A(v)$ help handle the assignment constraints~\cite{ren2023cbss}. 
The edges $E_{TF}$ and their corresponding costs $C_{TF}$ are set in a way such that an optimal solution tour in $G_{TF}$ has the features that allow the reconstruction of a joint sequence out of the tour.
In short, (i) when an optimal tour visits the goal of agent $i$, the tour will visit the initial vertex of the next agent $i+1$.
This allows breaking the tour into multiple sub-tours based on the goals during the reconstruction (Fig.~\ref{fig:transform}(b)), and each sub-tour becomes a target sequence of an agent in the resulting joint sequence.
(ii) When an optimal solution tour in $G_{TF}$ visits a copy of targets or destination $v$, the tour will immediately visit all other copies of $v$ before visiting a next target.
When reconstructing the joint sequence, only the first copy of a target is kept, which identifies the agent that is assigned to this target, and all other subsequent copies are removed during the reconstruction of a joint sequence (Fig.~\ref{fig:transform}(c)).

For each target or goal $v \in V_t \bigcup V_d$, a copy of $v^i$ of $v$ for agent $i$ is created if $i \in f_A(v)$.
To include task duration, the new part in \abbrCBXS is that, it adds an additional cost $\tau^i(v)$ for the in-coming edge that goes into $v^i$, where $\tau^i(v)$ is the task duration.
Note that although the task duration is defined on vertices, we are able to add the task duration to a corresponding edge because there is a one-one correspondence between each target and an in-coming edge.
For the same target, different agents may have different duration, which is also allowed here, since for every target, there is a unique in-coming edge for each agent.
The property of the transformation method in \abbrCBXS is summarized with the following theorem.
The proof in \cite{5160666} and \cite{ren2023cbss} can be easily adapted.

\begin{theorem}\label{thm:transformation}
For a \abbrMATSPF Problem, given $G_T$, $f_A$ and $\tau$, the transformation method computes a minimum cost joint sequence that visits all targets and ends at goals while satisfying all assignment constraints.
\end{theorem}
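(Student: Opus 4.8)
The plan is to prove Theorem~\ref{thm:transformation} by exhibiting a \emph{cost-preserving bijection} $\Phi$ between the feasible joint sequences of the \abbrMATSPF instance (those that assign every $v\in V_t\cup V_d$ to an eligible agent $i\in f_A(v)$, route each agent from its start through its assigned targets to a goal, and pay the corresponding task durations) and the finite-cost Hamiltonian tours of $G_{TF}$, and then to conclude that a minimum-cost tour maps under $\Phi^{-1}$ to a minimum-cost feasible joint sequence. Since the construction in \abbrCBXS differs from the one in CBSS only by the non-negative constants $\tau^i(v)$ added to the in-coming edges of the copies, I would reuse the bijection and the structural lemmas already established in \cite{5160666} and \cite{ren2023cbss} and re-derive only the cost accounting.

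First I would recall the two structural properties of a finite-cost optimal tour in $G_{TF}$ from Sec.~\ref{subsec:transformation}: property (i), that the tour visits the start of agent $i+1$ immediately after the goal copy of agent $i$, which lets the tour be cut into $N$ sub-tours; and property (ii), that the copies $\{v^i:i\in f_A(v)\}$ of any $v$ are visited consecutively, so that discarding all but the first (representative) copy identifies a unique assigned agent $a(v)\in f_A(v)$. These properties are inherited verbatim from \cite{ren2023cbss}, because adding a finite non-negative constant to an existing edge changes neither the set of edges available to a finite-cost tour nor the large-constant gadget that enforces (i) and (ii). Applying (i) and (ii), the reconstruction $\Phi^{-1}$ yields a joint sequence in which each agent $i$ starts at $v_o^i$, ends at a goal in $V_d$, and is assigned exactly the targets whose representative copy it owns; feasibility of the assignment constraints is automatic, since a copy $v^i$ is created only when $i\in f_A(v)$.

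Next I would verify the cost identity by splitting the tour cost into a travel part and a duration part. The travel part equals the joint-sequence travel cost $\sum_{i\in I}cost(\gamma^i)$ by the unchanged argument of \cite{ren2023cbss}. For the duration part, I would use the one-to-one correspondence, noted in Sec.~\ref{subsec:transformation}, between each target/goal $v$ and the unique in-coming edge of its assigned copy: property (ii) forces the tour to enter the copy-cluster of $v$ exactly once and through the representative copy $v^{a(v)}$, so the added term $\tau^{a(v)}(v)$ is paid exactly once per $v$, giving a duration part equal to $\sum_{v\in V_t\cup V_d}\tau^{a(v)}(v)$, which is precisely the total task-duration cost of the reconstructed joint sequence. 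Hence $\Phi$ preserves cost up to the uniform large-constant offset used by the gadget, which is identical for all feasible tours and therefore does not change which tour attains the minimum. A minimum-cost tour thus maps to a minimum-cost feasible joint sequence, proving the theorem.

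I expect the crux to be the duration-counting step, specifically ruling out the double-counting that would arise if the $\tau^i(v)$ term on an \emph{intra-cluster} edge (an edge from one copy of $v$ to another) were also charged along the consecutive visit of the copies forced by property (ii). The resolution is to confirm that the duration term sits on the \emph{inter-target} in-coming edge of each copy (the edge entering the cluster from a genuine previous target), which by property (ii) is traversed exactly once per target and only at the representative copy, whereas the intra-cluster edges are the zero-cost gadget edges of the assignment construction and carry no duration. Pinning down precisely on which edge of each cluster the $\tau^i(v)$ term lives, and confirming it is traversed exactly once with the assigned agent's value, is the main obstacle; the remaining bookkeeping follows by directly adapting the cost arguments of \cite{5160666} and \cite{ren2023cbss}.
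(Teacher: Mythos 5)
Your proposal is correct and follows essentially the same route as the paper, whose entire proof is the remark that the arguments of \cite{5160666} and \cite{ren2023cbss} ``can be easily adapted'': you reuse the structural properties (i) and (ii) of the transformation, which are unaffected by adding finite non-negative duration costs, and then redo only the cost accounting. Your crux --- that $\tau^{i}(v)$ must sit on the inter-cluster in-coming edge of the representative copy $v^{i}$, so it is charged exactly once per target with the assigned agent's value and never on the zero-cost intra-cluster edges --- is precisely the ``one-one correspondence between each target and an in-coming edge'' that the paper invokes in Sec.~\ref{subsec:transformation}, so your elaboration fills in the details the paper leaves implicit rather than diverging from it.
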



\begin{figure}[t]
\centering
    \includegraphics[width=0.99\linewidth]{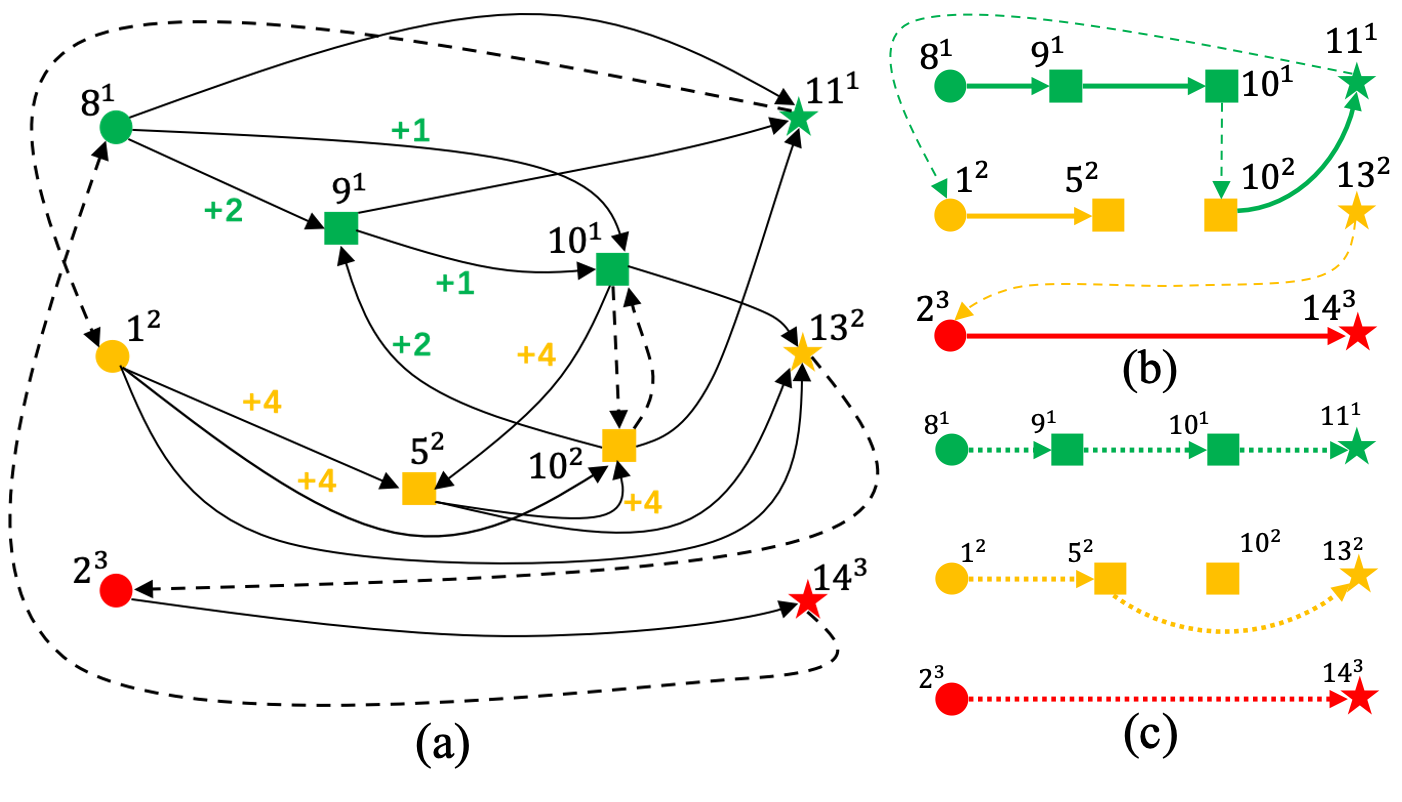}
    \vspace{-0.8cm}
    \caption{The transformation method for the toy example in Fig.~\ref{fig:toy_example}. Here, (a) shows the transformed graph $G_{TF}$. The cost related to task duration is denoted by $+x$, where $x$ is the corresponding task duration. (b) shows the TSP tour in $G_{TF}$ computed by a TSP solver. (c) shows the joint sequence untransformed from the TSP tour.}
    \vspace{-0.4cm}
    \label{fig:transform}
\end{figure}

\begin{example}
For the problem in Fig.~\ref{fig:toy_example}, as shown in Fig.~\ref{fig:transform}, the generated joint sequence by our transformation method is $\gamma = \{(8,9,10,11),(1,13),(2,6,14)\}$.
\end{example}

\subsection{Different Branching Rules} 
\label{subsec:branchingrules}
Naively applying the conflict resolution method in CBSS to \abbrMATSPF can lead to inefficient computation.
For example: when a vertex conflict $(i,j,v,t)$ is detected in an iteration of CBSS, two constraints $(i,v,t)$ and $(j,v,t)$ are created and leads to two new branches.
Say $i$ executes the task at target $v$ during time $[t,t+\tau^i(v)]$.
Then, the branch with the constraint $(j,v,t)$ replans the path for agent $j$.
Along the replanned path, agent $j$ may enter $v$ at time $t+1$, which is still in conflict with agent $i$ if $t+1 \in [t,t+\tau^i(v)]$.
This branch thus requires another iteration of conflict resolution.
When $\tau^i(v)$ is large, it can lead to many iterations of conflict resolution and thus slow down the computation.
Our new branching rule seeks to resolve multiple conflicts related to the same task in one iteration and therefore saves computational effort.

\begin{algorithm}[tb]
\caption{Pseudocode for GenerateConstraints}
\label{alg:constraints}
\small
\begin{algorithmic}[1]
    \State{\textbf{Input}: A vertex or edge conflict $\textit{cft}$}
    \State{\textbf{Output}: The generated constraints $\Omega$}
    \State $\Omega \gets \emptyset;\omega_1 \gets \emptyset;\omega_2 \gets \emptyset$
    \If{\textit{cft} is an edge conflict}
    \State $\omega_1 = \{(i,e,t)\}$; $\omega_2 = \{(j,e,t)\}$
    \ElsIf{\textit{CheckExecution}($i$)} \label{alg3:line6}
    \State $t_s, t_e = \textit{TaskStartEnd}(i,v,t)$
    \State $\omega_1 = \{(i,v,t_1)|t_s\leq t_1\leq t\}$
    \State $\omega_2 = \{(i,j,t_2)|t\leq t_2\leq t_e\}$ \label{alg3:line9}
    \ElsIf{\textit{CheckExecution}($j$)} \label{alg3:line10}
    \State $t_s, t_e = \textit{TaskStartEnd}(j,v,t)$
    \State $\omega_1 = \{(i,v,t_1)|t\leq t_1\leq t_e\}$
    \State $\omega_2 = \{(i,v,t_2)|t_s\leq t_2\leq t\}$ \label{alg3:line13}
    \Else \label{alg3:line14}
    \State $\omega_1 = \{(i,v,t)\}$; $\omega_2 = \{(j,v,t)\}$ \label{alg3:line15}
    \EndIf
    \State $\Omega \gets \{\omega_1, \omega_2\}$
    \State \Return $\Omega$ 
\end{algorithmic}
\end{algorithm}

We modify the function \textit{GenerateConstraints} on Line12 in Alg.~\ref{alg:CBSS}, and the modified branching rule is shown in Alg.~\ref{alg:constraints}.
For an edge conflict $(i,j,e,t)$, agent $i$ and $j$ must not be executing any task, so \abbrCBXS resolves the conflict in the same way as CBSS by generating two constraints $(i,e,t)$ and $(j,e,t)$. 
For a vertex conflict $(i,j,v,t)$, agent $i$ and $j$ can not be simultaneously executing tasks at target $v$ at time $t$. Therefore, there are three cases to be considered:
\begin{itemize}
    \item 
First, if \textit{CheckExecution} finds agent $i$ is executing task at vertex $v$, two \emph{sets of constraints} are generated. The first set is $\{(i, v, t_1)|t_s \leq t_1 \leq t\}$ where $t_s$ is the time that agent $i$ starts the task at $v$. The second set of constraints is $\{(j, v, t_2)|t \leq t_2 \leq t_e\}$ where $t_e$ is the time that agent $i$ ends the task at $v$ (Line~\ref{alg3:line6}-\ref{alg3:line9}).
    \item 
Second, if \textit{CheckExecution} finds agent $j$ is executing task at $v$, two sets of constraints are generated in a similar way (Line~\ref{alg3:line10}-\ref{alg3:line13}).
    \item
If none of the agents are executing task at $v$, two constraints $(i,v,t)$ and $(j,v,t)$ are generated in the same way as in CBSS (Line~\ref{alg3:line14}-\ref{alg3:line15}).
\end{itemize}

We discuss the idea behind this branching rule in Sec. \ref{subsec:cbssd_analysis}.

\begin{example}
In Fig.~\ref{fig:toy_example}, there is a vertex conflict $(i=1, j=2, v=9, t=1)$.
When using the branching rule in CBSS to resolve this conflict, additional vertex conflicts $(i=1,j=2,v=9,t=2),(i=1,j=2,v=9,t=3)$ will be detected and resolved in the subsequent iterations.
When using our new branching rule, the conflict $(i=1, j=2, v=9, t=1)$ can be resolved in only one iteration. 
\end{example}

\begin{example}
Here we provide a comparison between the solutions returned by \abbrCBSS and \abbrCBXS for the toy problem in Fig.~\ref{fig:toy_example}.
\abbrCBSS returns $\pi_c^0=((8, 9, 9, 9, 10, 10, 11)$, $\pi_c^1=(1, 5, 5, 5, 9, 13)$, $\pi_c^2=(2, 6, 6, 6, 6, 6, 6, 10, 14))$ with the cost of 19. 
\abbrCBXS returns $\pi_*^0=((8, 9, 9, 9, 10, 10, 11)$, $\pi_*^1=(1, 5, 5, 5, 9, 13)$, $\pi_*^2=(2, 6, 6, 6, 6, 6, 10, 14))$ with the cost of 18 which is the optimal cost.
One can expect larger cost difference when the task duration becomes larger.
\end{example}

\subsection{Solution Optimality of \abbrCBXS}\label{subsec:cbssd_analysis}

We first present the property of our new branching rule and then use it to show the solution optimality of \abbrCBXS.

\begin{definition}[\textit{Mutually Disjunctive Constraints}]
Two vertex or edge constraints for agents $i$ and $j$ are mutually disjunctive~\cite{li2019multi}, if there does not exist a conflict-free joint path such that both constraints are violated.
Besides, two sets of constraints corresponding to agent $i$ and $j$ are mutually disjunctive if every constraint in one set is mutually disjunctive with every constraint in the other set.
\end{definition}

\begin{theorem}
\label{lemma:md}
For any conflict, the branching rule in Sec.~\ref{subsec:branchingrules} generates two mutually disjunctive constraint sets $C_1$ and $C_2$.
\end{theorem}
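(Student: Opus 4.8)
The plan is to argue case by case, matching the branching rule in Algorithm~\ref{alg:constraints}. For each of the four branches, I would show that the two generated constraint sets $C_1=\omega_1$ and $C_2=\omega_2$ are mutually disjunctive, i.e.\ that no conflict-free joint path can violate a constraint drawn from $C_1$ together with a constraint drawn from $C_2$. The key observation driving every case is the physical meaning of a conflict: two agents cannot simultaneously occupy the same resource (vertex or edge) at the same time without colliding, so any joint path that violates one constraint from each set would, by construction, force exactly such a simultaneous occupancy.

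First I would dispose of the two easy branches. For an edge conflict, $\omega_1=\{(i,e,t)\}$ and $\omega_2=\{(j,e,t)\}$; violating both means agent $i$ traverses $e$ and agent $j$ traverses $e$ in opposite directions between times $t$ and $t{+}1$, which is precisely an edge conflict and hence not conflict-free. The same single-constraint argument handles the \textbf{else} branch (Line~\ref{alg3:line14}--\ref{alg3:line15}), where violating $(i,v,t)$ and $(j,v,t)$ forces both agents onto vertex $v$ at time $t$, a vertex conflict. These cases reduce to the standard CBS disjunctiveness argument and are essentially immediate.

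The substantive work is in the two task-execution branches, which I would treat symmetrically (proving one and noting the other is identical by swapping $i\leftrightarrow j$). Take the case where agent $i$ executes its task at $v$ over $[t_s,t_e]$. Here $\omega_1=\{(i,v,t_1)\mid t_s\le t_1\le t\}$ forbids $i$ from occupying $v$ during the \emph{first half} of its execution window, and $\omega_2=\{(j,v,t_2)\mid t\le t_2\le t_e\}$ forbids $j$ from occupying $v$ during the \emph{second half}. I would pick an arbitrary pair, a constraint $(i,v,t_1)$ from $C_1$ with $t_1\le t$ and a constraint $(j,v,t_2)$ from $C_2$ with $t_2\ge t$, and show that violating both is impossible in a conflict-free path. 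Violating the first means $i$ is at $v$ at time $t_1\le t$; since $i$ is executing a single contiguous task ending at $t_e$, and the path is a legal execution, $i$ in fact occupies $v$ throughout $[t_s,t_e]\supseteq[t_1,t_2]$. Violating the second means $j$ is at $v$ at time $t_2$ with $t_1\le t\le t_2\le t_e$, so $t_2$ lies inside $i$'s occupancy window $[t_s,t_e]$, giving a vertex conflict of the task-duration type defined in Sec.~\ref{matspf:sec:problem}. Thus no conflict-free path can violate both.

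The main obstacle I anticipate is the overlap point $t_1=t_2=t$, shared by both sets, and more generally justifying the claim that violating $(i,v,t_1)$ forces $i$ to occupy $v$ across the whole window rather than merely at the isolated instant $t_1$. This relies on the fact that during task execution the agent is pinned to $v$ for the entire contiguous interval $[t_s,t_e]$ (it cannot leave mid-task and return), which is exactly why a single time-stamped presence at $t_1$ propagates to occupancy at $t_2$. I would state this pinning property explicitly as the bridge between the two sets, since it is what makes the \emph{split} window construction valid and lets one iteration cover all the per-timestep conflicts that CBSS would otherwise generate one at a time. Once this pinning fact is made precise, the disjunctiveness follows directly, and the theorem is established across all four branches.
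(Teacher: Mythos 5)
Your proposal takes essentially the same approach as the paper's proof: the easy branches (edge conflict and the no-execution vertex conflict) are reduced to the standard CBS disjunctiveness argument, and the task-execution branches are handled by contradiction, showing that any conflict-free joint path violating one constraint from each set would force a vertex conflict of the task-duration type. In fact, your write-up is more detailed than the paper's own proof, which, after assuming a conflict-free joint path $\pi'$ violates a constraint in each of $C_1$ and $C_2$, simply asserts that $i$ and $j$ are then in conflict along $\pi'$ --- precisely the step your ``pinning'' observation (the agent cannot leave $v$ mid-task) is meant to justify.
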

\begin{proof}
If the conflict between $i$ and $j$ is an edge or vertex conflict and none of the agents execute a task in $v$, the constraints are generated in the same way as CBS and are thus mutually disjunctive~\cite{li2019multi}.
Otherwise, consider the two sets $C_1,C_2$ of constraints generated in Alg.~\ref{alg:constraints}.
We prove that $C_1$ and $C_2$ are mutually disjunctive by contradiction.
Assuming that $C_1$ and $C_2$ are not mutually disjunctive, then there must exist at least one conflict-free joint path $\pi'$ that violates at least one constraint in $C_1$ and at least one constraint in $C_2$. 
Then, $i$ and $j$ are in conflict along $\pi'$, which leads to contradiction.
\end{proof}



With Theorem~\ref{thm:transformation} \ref{lemma:md}, the proof in~\cite{ren2023cbss} can be readily adapted to show that the solution returned by \abbrCBXS is optimal for \abbrMATSPF.
First, the transformation method in Sec.~\ref{subsec:transformation} (Theorem~\ref{thm:transformation}) and the partition method in~\cite{ren2023cbss,VANDERPOORT1999409,Hamacher1985KBS} finds K-best joint sequences for a given $K$.
For each of those joint sequences, \abbrCBXS uses CBS-like search to resolve conflicts, and Theorem \ref{lemma:md} ensures that the search with the new branching rule can find a conflict-free joint path if one exists.
Finally, \abbrCBXS is same as CBSS~\cite{ren2023cbss} when generating new joint sequences and resolving conflicts, by using an OPEN list to prioritize all candidate nodes based on their $g$-costs and always select the minimum-cost one for processing.
The returned solution is thus guaranteed to be a conflict-free joint path with the minimum-cost.
We thus have the following theorem. 

\begin{theorem}
\label{thm:optimal_cbxs}
For a solvable \abbrMATSPF problem (i.e., the problem has at least one conflict-free joint path), \abbrCBXS terminates in finite time and returns an optimal solution.
\end{theorem}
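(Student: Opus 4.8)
The plan is to adapt the optimality argument of CBSS~\cite{ren2023cbss} by verifying that the two modifications in \abbrCBXS---the task-duration-aware transformation (Theorem~\ref{thm:transformation}) and the new branching rule (Theorem~\ref{lemma:md})---preserve the three properties on which that argument rests: a valid cost lower bound from the K-best sequences, completeness and optimality of the CBS-like search for each fixed joint sequence, and a globally best-first expansion order over the forest of CBS trees. I would begin with the lower-bound property. By Theorem~\ref{thm:transformation}, the transformation with task duration yields a minimum-cost joint sequence $\gamma_1^*$ that visits all targets, ends at goals, and satisfies the assignment constraints while correctly accounting for task duration but ignoring agent-agent conflicts. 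Since every feasible \abbrMATSPF solution induces such a conflict-ignoring joint sequence of no smaller cost, $cost(\gamma_1^*)$ is a lower bound on the optimal \abbrMATSPF cost; more generally the partition method~\cite{VANDERPOORT1999409,Hamacher1985KBS} enumerates joint sequences in non-decreasing cost order, so $cost(\gamma_k^*)$ lower-bounds the cost of any solution whose induced sequence is not among the first $k-1$.

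Next I would show that for each fixed joint sequence the CBS-like search with the new branching rule is complete and optimal, i.e.\ it finds a minimum-cost conflict-free joint path consistent with that sequence whenever one exists. The crucial point is that branching must not discard any conflict-free joint path consistent with the sequence. Here I would invoke Theorem~\ref{lemma:md}: since the two generated constraint sets $C_1$ and $C_2$ are mutually disjunctive, no conflict-free joint path violates a constraint of $C_1$ and a constraint of $C_2$ simultaneously, so every conflict-free joint path satisfies \emph{all} of $C_1$ or \emph{all} of $C_2$ and is therefore retained in the feasible region of at least one of the two children. Combined with the fact that the low-level planner always returns a minimum-cost path satisfying the imposed constraints, this yields that the minimum-cost conflict-free joint path for the sequence is never pruned.

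I would then assemble the global argument exactly as in CBSS: all nodes across all trees share a single OPEN ordered by $g$-cost, and a new root node for $\gamma_{k+1}^*$ is spawned only when a popped node's cost exceeds $cost(\gamma_{k+1}^*)$, so the expansion is globally best-first. The first popped node carrying a conflict-free joint path therefore has the minimum cost among all conflict-free nodes and, by the lower-bound property, equals the optimal \abbrMATSPF cost. For termination I would bound the search by the finite optimal cost $C^*$: only finitely many joint sequences have cost at most $C^*$, and for each such sequence only finitely many constraint sets---hence finitely many CBS nodes---arise within the induced finite time horizon, so a conflict-free optimal node is reached after finitely many iterations.

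The main obstacle I expect is the completeness step for the new branching rule, since \abbrCBXS adds whole \emph{sets} of duration-induced constraints rather than the single constraints of vanilla CBS. Making this airtight requires using the mutual-disjunctiveness guarantee of Theorem~\ref{lemma:md} at the right granularity---pairing every constraint of one set against every constraint of the other---so that the ``satisfies all of $C_1$ or all of $C_2$'' dichotomy holds, and it requires checking that the low-level planner can always realize a feasible set of such constraints (in particular that the $t_s,t_e$ windows produced by \emph{TaskStartEnd} never over-constrain a genuinely conflict-free path). The remaining steps are largely bookkeeping adaptations of the CBSS proof and should follow once this granularity issue is settled.
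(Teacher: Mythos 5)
Your proposal is correct and follows essentially the same route as the paper's own (much terser) proof: adapt the CBSS optimality argument by combining Theorem~\ref{thm:transformation} (the duration-aware transformation plus partition method yields K-best joint sequences serving as cost lower bounds), Theorem~\ref{lemma:md} (mutual disjunctiveness guarantees the new branching rule never prunes a conflict-free joint path), and the globally best-first processing of the shared OPEN list ordered by $g$-cost. Your write-up is in fact more detailed than the paper's sketch---which does not spell out the termination argument or the ``satisfies all of $C_1$ or all of $C_2$'' dichotomy---and the granularity concern you flag is precisely the role Theorem~\ref{lemma:md} plays in the paper as well.
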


	
	
	\section{Results}\label{matspf:sec:result}
	
\subsection{Test Settings}\label{matspf:sec:result:settings}
We implement both \abbrCBSS and \abbrCBXS in Python and use LKH-2.0.10 as the TSP solver.\footnote{LKH \cite{lkh} (\url{http://akira.ruc.dk/~keld/research/LKH/}) is a heuristic algorithm for TSP, which does not guarantee solution optimality but often finds an optimal solution for large-scale TSP instances in practice. We use LKH due to its computational efficiency. Other TSP solvers can also be used.}
Similarly to \cite{ren2023cbss}, we use SIPP~\cite{phillips2011sipp} as the low-level
planner in a space-time graph $G \times \{0, 1, 2, ..., T \}$ subject to vertex and edge constraints.
We select two maps of different sizes from a online data set\footnote{\url{https://movingai.com/benchmarks/mapf/index.html}} and generate a four-connected grid-like graph $G$ from each of the maps.
This data set also includes 25 test instances for each map, where each instance includes hundreds of start-goal pairs. For each case, we use the first start-goal pair as the $v_o$ and $v_d$ for the robot and select randomly from the rest of the goals as the target $v_t$.
All tests run on a computer with an Intel Core i9-13900K CPU and 64 GB RAM. 
Each test instance has a runtime limit of 1 minute and $\epsilon$, a hyper-parameter in CBSS~\cite{ren2023cbss} that allows for bounded sub-optimal solutions, is always set to be zero.
Recall that, $N,M$ are the number of agents and targets respectively and the number of destinations are not included in $M$.

Since there many variations of \abbrMATSPF by selecting different $f_A$ and $\tau$, it is impossible to evaluate all of them.
We select three representative scenarios: 
\begin{itemize}
    \item (Scene 1) All targets are \abbranony (i.e., $f_A(v) = I, \forall v \in V_t \bigcup V_d\ and\ \tau^i(v) = \tau^j(v), \forall i,j \in f_A(v)$).  Here, the transformation in \abbrCBXS can be simplified in the same way as discussed in \cite{5663700,ren2021ms,ren2023cbss}: There is no need to make copies of targets and destinations for each agent and the related edges can be deleted.

    \item (Scene 2) Every destination is assigned to a unique agent. Every target has two randomly chosen eligible agents, and the task duration of all targets stays the same for all eligible agents.

    \item (Scene 3) Similar to Scene 2, every destination is assigned to a unique agent and every target has two randomly chosen eligible agents. Additionally, every target has heterogeneous task duration for the eligible agents, which is a randomly sampled integer. The sampling ranges vary in different tests and will be elaborated later.
\end{itemize}


\subsection{\abbrCBXS Versus \abbrCBSS}
\label{subsec:result1}
This section compares \abbrCBXS and \abbrCBSS in Scene 1 and 2 for $N \in \{10\}, M \in \{10,20,30,40,50\}$ and $\tau^i(v) \in \{2,5,10,20\}$. We test with two maps, Random $32\times 32$ and Maze $32 \times 32$.
Fig.~\ref{fig:SR} and~\ref{fig:CR} report the corresponding success rates and cost ratios. The cost ratio is defined as:
\begin{equation}
cost\ ratio = \frac{cost(\pi_{\abbrCBSS})-cost(\pi_{\abbrCBXS})}{cost(\pi_{\abbrCBSS})}\times100\%
\end{equation}




\begin{figure}[tb]
    \centering
    \includegraphics[width=1\linewidth]{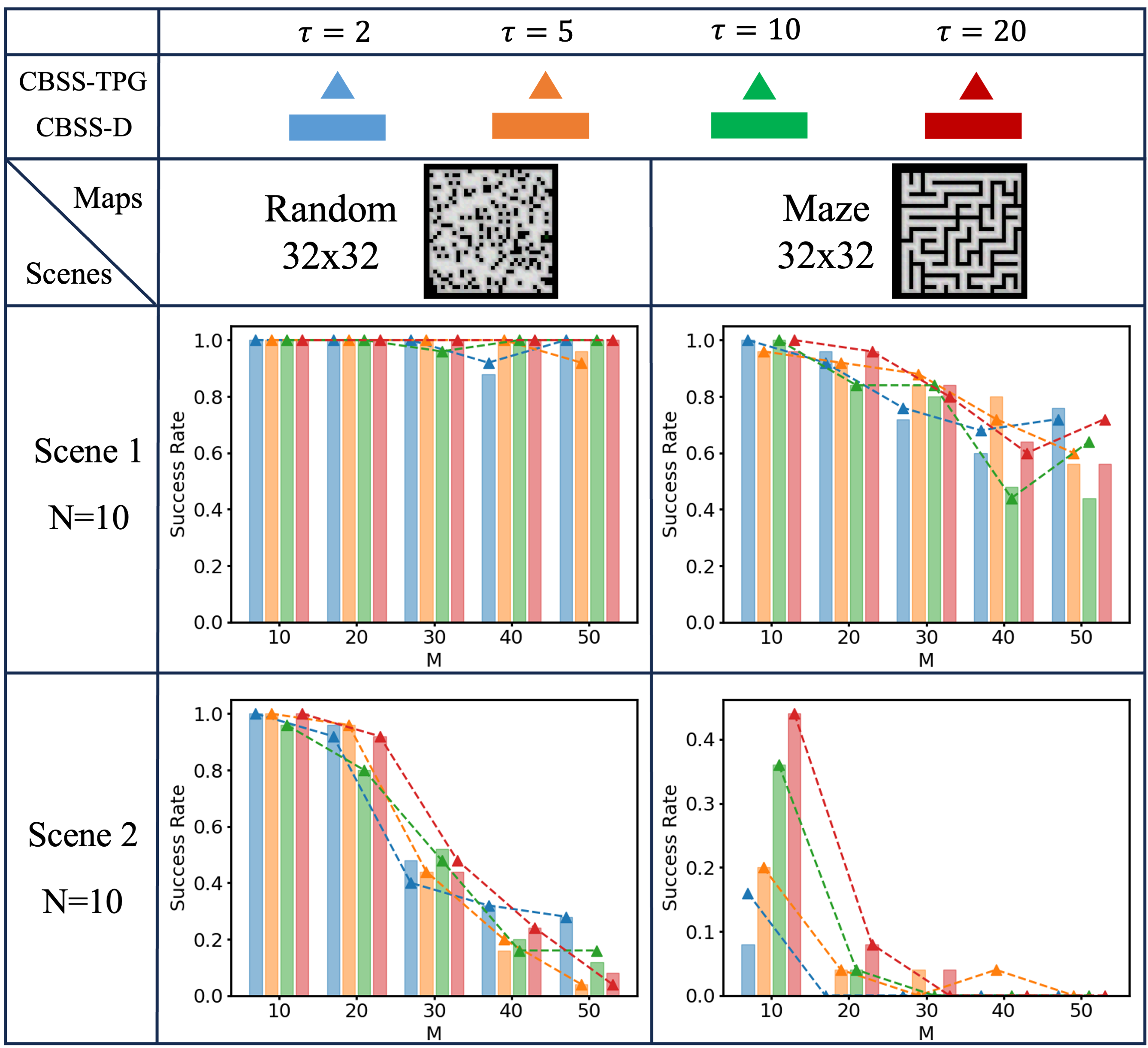}
    \caption{
    The success rates of \abbrCBSS and \abbrCBXS with various number of targets $M$ and task duration $\tau$.
    The maze is harder than the random map in general and both algorithms achieve similar success rates.
    }
    \label{fig:SR}
\end{figure}
For the success rates, as shown in Fig.~\ref{fig:SR}, first, the maze map is harder than the random map in general, where the agents are more likely to run into conflict with each other.
Both planners have lower success rates in the maze map than the random map.
Second, as $M$ increases, the corresponding TSP is harder to solve and thus the success rates decrease. 
Most of the failed instances time out when solving a TSP problem. 
In addition, \abbrCBXS and \abbrCBSS have similar success rate in general.


For the cost ratios shown in Fig.~\ref{fig:CR}, \abbrCBXS finds better (up to 40\% cheaper) solutions than \abbrCBSS does, especially when $\tau$ is large.
This is expected since \abbrCBSS does not consider task duration in planning and simply let the related agents wait till the other agents finish their task, while \abbrCBXS considers task duration during planning.


\begin{figure}[tb]
    \centering
    \includegraphics[width=1\linewidth]{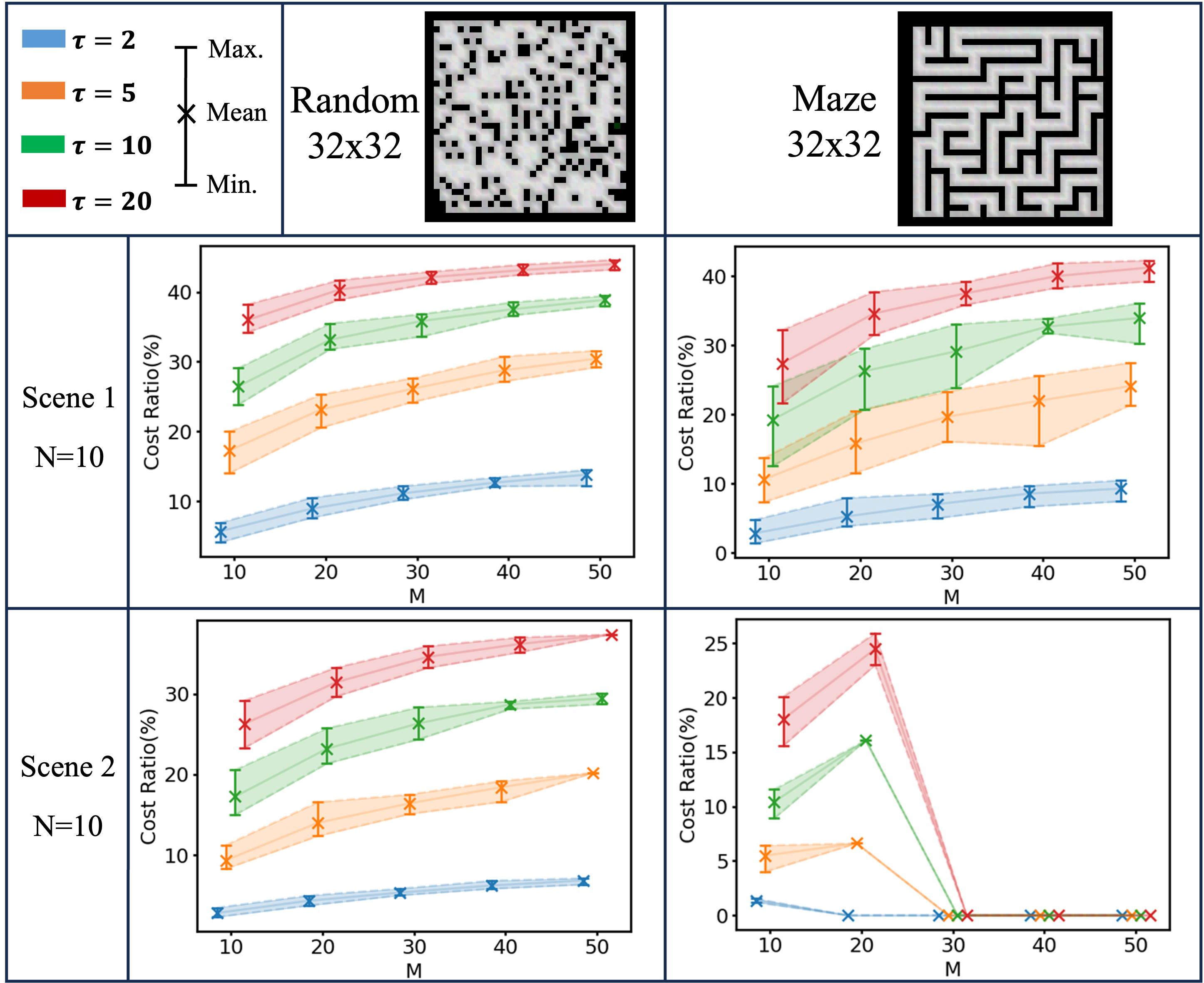}
    \caption{The cost ratios of \abbrCBSS and \abbrCBXS. The vertical axis shows the percentage.
    In Scene 2 in the maze map, the cost ratio quickly goes down to zero because the corresponding success rate is almost zero.
    \abbrCBXS finds cheaper solution than \abbrCBSS especially when $\tau$ is large.
    }
    \label{fig:CR}
    \vspace{-0.cm}
\end{figure}

\subsection{\abbrCBXS With Different Branching Rules}
\label{subsec:result2}
This section compares the number of conflicts resolved in \abbrCBXS by using different branching rules, namely the regular branching rule in CBSS~\cite{ren2023cbss} (denoted as old) and the new branching rule developed in this paper (denoted as new).
We run the tests in Scene 2 since it is more challenging according to the results in the previous section.
In addition to varying $M$ and $\tau$, we also vary the number of agents $N \in \{5,10,20\}$.
For each $N$, we sum up the data for $M=\{10,20,30,40,50\}$ and $\tau = \{2,5,10,20\}$ and thus there are 500 instances for each $N$.
Let $N_c^{old}, N_c^{new}$ denotes the number of resolved conflicts using old and new branching rules respectively.

\begin{figure}[tb]
    \centering
    \includegraphics[width=0.98\linewidth]{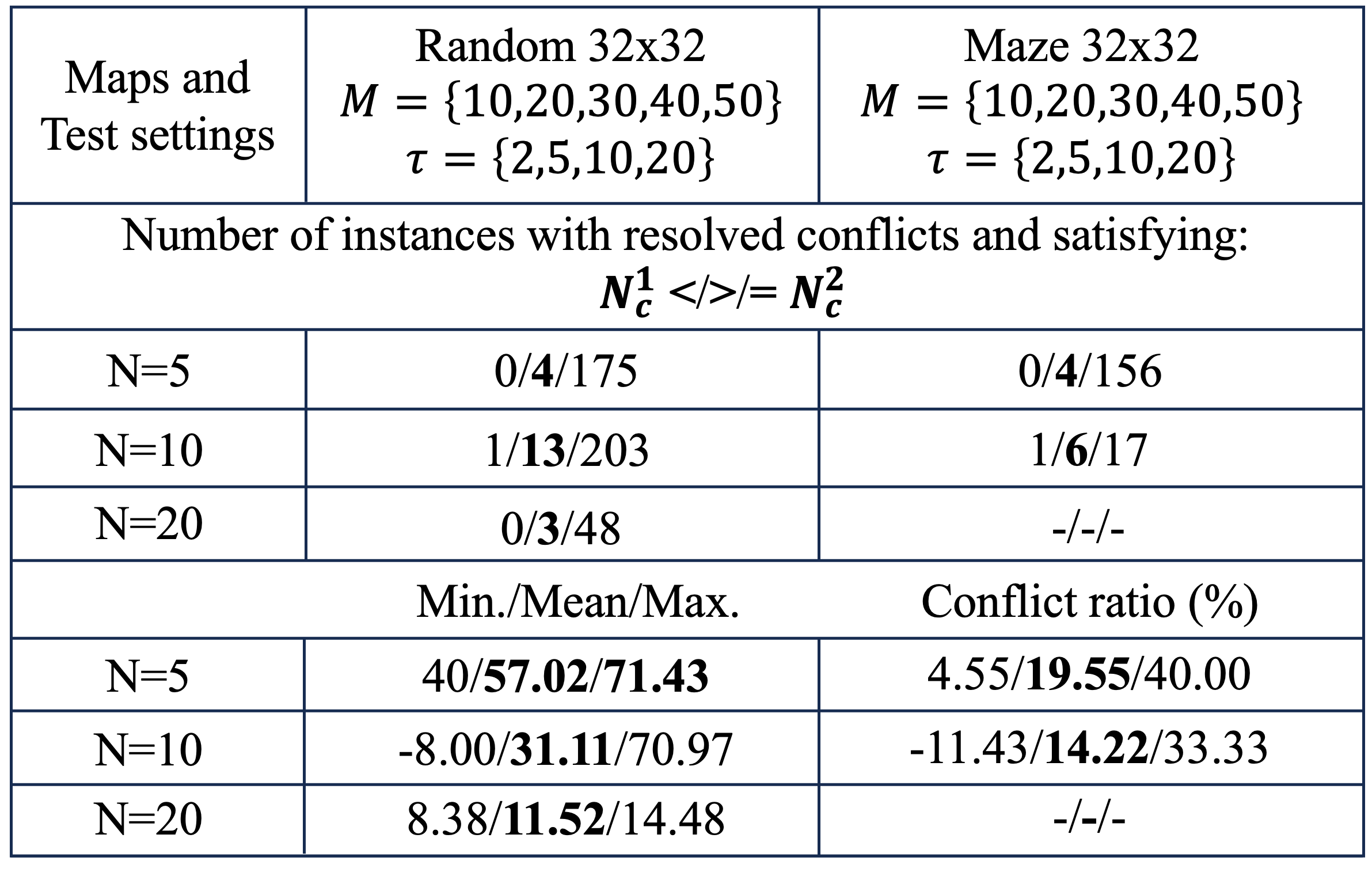}
    \vspace{-0.2cm}
    \caption{The number of resolved conflicts of \abbrCBXS with different branching rules. 
    The new branching rule is able to reduce the number of conflicts resolved during planning.
    }
    \label{fig:NC}
    \vspace{-0.1cm}
\end{figure}


As shown in Fig.~\ref{fig:NC}, $N_c^{new}$ is usually smaller than $N_c^{old}$, which means the new branching help reducing the number of conflicts resolved during the planning.
There are a few cases where $N_c^{old} < N_c^{new}$, which is due to tie breaking since we do not have a fixed rule to break ties.

We further introduce conflict ratio $({N_c^{old}-N_c^{new}})/{N_c^{old}}\times 100\%$ to compare the number of conflicts.
As shown in Fig.~\ref{fig:NC}, the new branching rule is able to reduce up to 70\% of the conflicts compared with the old branching rule.
The new rule is particularly beneficial with the long task duration.

\subsection{Gazebo Simulation}

\label{subsec:gazebo}
We run simulation in Gazebo for Scene 3 to execute the joint path planned by \abbrCBXS. Our simulation settings are $N=5, M=10, \tau \in [2, 10]$ as shown in Fig.~\ref{fig:simulation}.
\begin{figure}[tb]
    \centering
    \includegraphics[width=0.98\linewidth]{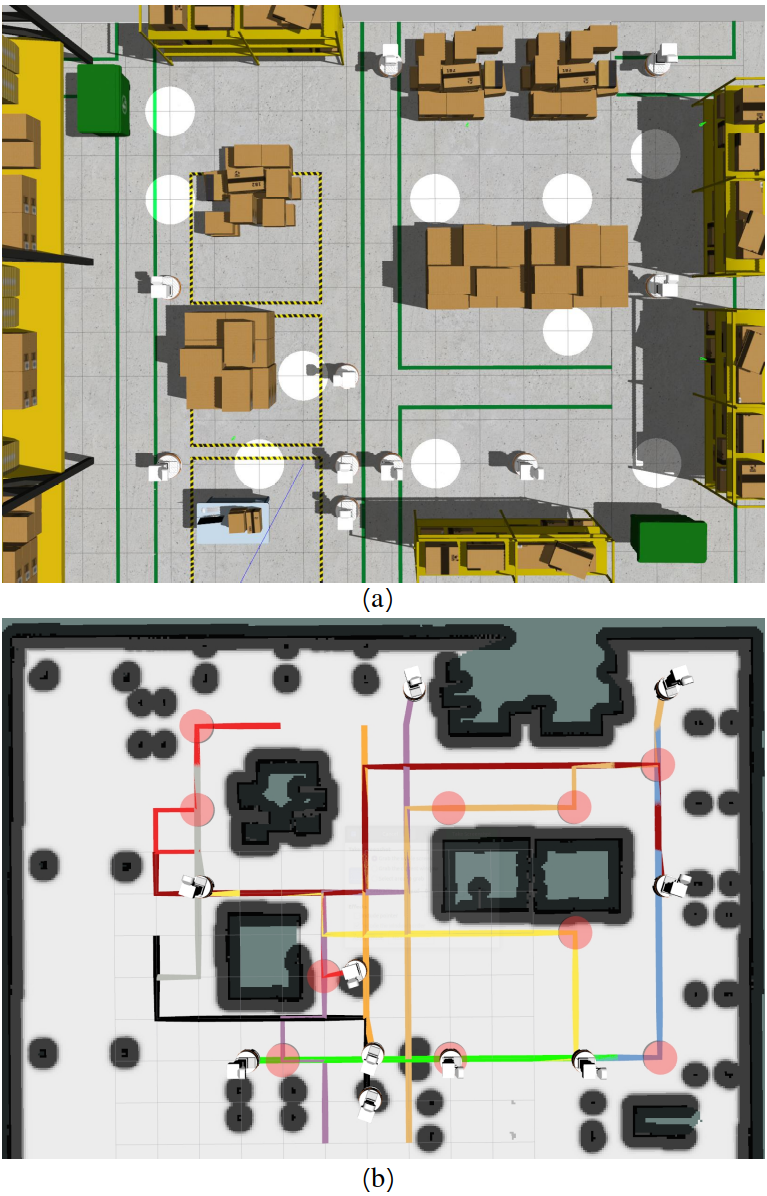}
    \caption{Simulation using Gazebo and path visualization with Rviz of Scene 3. (a) Gazebo simulation of a warehouse, multiple robots and target tasks (marked by white disks). (b) Rviz visualization of robots, joint path generated by \abbrCBXS and target tasks (marked by red disks).
    }
    \label{fig:simulation}
\end{figure}
During the execution, due to the disturbance and uncertainty in the robot motion, additional conflicts may occur when robots execute their paths. 
We thus implement a management system to monitor the position of all robots and coordinate their motion, by taking advantage of TPG in a similar way as described in \abbrTPGD.
Specifically, a TPG is first created based on the planned paths by \abbrCBXS.
When executing the path, nodes in TPG that has not precedence requirements (i.e., satisfying the conditions as described in \abbrTPGD) are deleted from the TPG and are sent to the robots for execution.
By doing so, the robots remain collision-free along their paths, even if a robot has unexpected delay or the task duration used by the planner mismatches the actual task execution time.
We demonstrate the simulation experiment in the multimedia attachment.
	
	\section{Conclusion and Future Work}\label{matspf:sec:conclude}
	This article investigates a generalization of \abbrMCPF called \abbrMATSPF, where agents execute tasks at target locations with heterogeneous duration.
We developed two methods—\abbrCBSS and \abbrCBXS to handle \abbrMATSPF, analyzed their properties. verified them with simulation with up to 20 agents and 50 targets, and discussed their pros and cons.
For future work, we can consider time window constraints on tasks.
	


\begin{thebibliography}{10}

\bibitem{9197527}
Kyle Brown, Oriana Peltzer, Martin~A. Sehr, Mac Schwager, and Mykel~J. Kochenderfer.
\newblock Optimal sequential task assignment and path finding for multi-agent robotic assembly planning.
\newblock In {\em 2020 IEEE International Conference on Robotics and Automation (ICRA)}, pages 441--447, 2020.

\bibitem{Hamacher1985KBS}
Horst~W. Hamacher and Maurice Queyranne.
\newblock K best solutions to combinatorial optimization problems.
\newblock {\em Annals of Operations Research}, 4:123--143, 1985.

\bibitem{lkh}
Keld Helsgaun.
\newblock General k-opt submoves for the lin–kernighan tsp heuristic.
\newblock {\em Mathematical Programming Computation}, 1:119--163, 10 2009.

\bibitem{honig2018conflict}
Wolfgang H{\"o}nig, Scott Kiesel, Andrew Tinka, Joseph Durham, and Nora Ayanian.
\newblock Conflict-based search with optimal task assignment.
\newblock In {\em Proceedings of the International Joint Conference on Autonomous Agents and Multiagent Systems}, 2018.

\bibitem{honig2016multi}
Wolfgang H{\"o}nig, TK~Kumar, Liron Cohen, Hang Ma, Hong Xu, Nora Ayanian, and Sven Koenig.
\newblock Multi-agent path finding with kinematic constraints.
\newblock In {\em Proceedings of the International Conference on Automated Planning and Scheduling}, volume~26, pages 477--485, 2016.

\bibitem{honig2019persistent}
Wolfgang Hönig, Scott Kiesel, Andrew Tinka, Joseph~W. Durham, and Nora Ayanian.
\newblock Persistent and robust execution of mapf schedules in warehouses.
\newblock {\em IEEE Robotics and Automation Letters}, 4(2):1125--1131, 2019.

\bibitem{junger1995traveling}
Michael J{\"u}nger, Gerhard Reinelt, and Giovanni Rinaldi.
\newblock The traveling salesman problem.
\newblock {\em Handbooks in operations research and management science}, 7:225--330, 1995.

\bibitem{li2019multi}
Jiaoyang Li, Pavel Surynek, Ariel Felner, Hang Ma, TK~Satish Kumar, and Sven Koenig.
\newblock Multi-agent path finding for large agents.
\newblock In {\em Proceedings of the AAAI Conference on Artificial Intelligence}, volume~33, pages 7627--7634, 2019.

\bibitem{ma2016optimal}
Hang Ma and Sven Koenig.
\newblock Optimal target assignment and path finding for teams of agents.
\newblock In {\em Proceedings of the 2016 International Conference on Autonomous Agents \& Multiagent Systems}, pages 1144--1152, 2016.

\bibitem{5160666}
Paul Oberlin, Sivakumar Rathinam, and Swaroop Darbha.
\newblock A transformation for a heterogeneous, multiple depot, multiple traveling salesman problem.
\newblock In {\em 2009 American Control Conference}, pages 1292--1297, 2009.

\bibitem{5663700}
Paul Oberlin, Sivakumar Rathinam, and Swaroop Darbha.
\newblock Today's traveling salesman problem.
\newblock {\em IEEE Robotics \& Automation Magazine}, 17(4):70--77, 2010.

\bibitem{phillips2011sipp}
Mike Phillips and Maxim Likhachev.
\newblock Sipp: Safe interval path planning for dynamic environments.
\newblock In {\em 2011 IEEE International Conference on Robotics and Automation}, pages 5628--5635. IEEE, 2011.

\bibitem{ren2024dms}
Zhongqiang Ren, Anushtup Nandy, Sivakumar Rathinam, and Howie Choset.
\newblock Dms*: Towards minimizing makespan for multi-agent combinatorial path finding.
\newblock {\em IEEE Robotics and Automation Letters}, 9(9):7987--7994, 2024.

\bibitem{ren2021ms}
Zhongqiang Ren, Sivakumar Rathinam, and Howie Choset.
\newblock Ms*: A new exact algorithm for multi-agent simultaneous multi-goal sequencing and path finding.
\newblock In {\em 2021 IEEE International Conference on Robotics and Automation (ICRA)}. IEEE, 2021.

\bibitem{ren2023cbss}
Zhongqiang Ren, Sivakumar Rathinam, and Howie Choset.
\newblock Cbss: A new approach for multiagent combinatorial path finding.
\newblock {\em IEEE Transactions on Robotics}, 2023.

\bibitem{sharon2015conflict}
Guni Sharon, Roni Stern, Ariel Felner, and Nathan~R Sturtevant.
\newblock Conflict-based search for optimal multi-agent pathfinding.
\newblock {\em Artificial Intelligence}, 219:40--66, 2015.

\bibitem{stern2019multi}
Roni Stern, Nathan Sturtevant, Ariel Felner, Sven Koenig, Hang Ma, Thayne Walker, Jiaoyang Li, Dor Atzmon, Liron Cohen, TK~Kumar, et~al.
\newblock Multi-agent pathfinding: Definitions, variants, and benchmarks.
\newblock {\em arXiv preprint arXiv:1906.08291}, 2019.

\bibitem{surynek2021multi}
Pavel Surynek.
\newblock Multi-goal multi-agent path finding via decoupled and integrated goal vertex ordering.
\newblock In {\em Proceedings of the International Symposium on Combinatorial Search}, volume~12, pages 197--199, 2021.

\bibitem{VANDERPOORT1999409}
Edo~S {van der Poort}, Marek Libura, Gerard Sierksma, and Jack~A.A {van der Veen}.
\newblock Solving the k-best traveling salesman problem.
\newblock {\em Computers \& Operations Research}, 26(4):409--425, 1999.

\bibitem{yu2013structure_nphard}
Jingjin Yu and Steven~M LaValle.
\newblock Structure and intractability of optimal multi-robot path planning on graphs.
\newblock In {\em Twenty-Seventh AAAI Conference on Artificial Intelligence}, 2013.

\bibitem{zhang2022multi}
Han Zhang, Jingkai Chen, Jiaoyang Li, Brian~C Williams, and Sven Koenig.
\newblock Multi-agent path finding for precedence-constrained goal sequences.
\newblock In {\em Proceedings of the 21st International Conference on Autonomous Agents and Multiagent Systems}, pages 1464--1472, 2022.

\end{thebibliography}
	
\end{document}